\useunder{\uline}{\ul}{}
\newcounter{mechanism}
\newcommand{\greedy}{GSS}
\newcommand{\dpss}{DPSS}
\newcommand{\newalgo}{DPSS-UCB}
\newtheorem{theorem}{Theorem}
\theoremstyle{definition}
\newtheorem{definition}{Definition}
\newtheorem{lemma}[theorem]{Lemma}
\preto\equation{\par\nobreak\small\noindent}
\preto\align{\par\nobreak\small\noindent}
\title{A Multi-Arm Bandit Approach To Subset Selection Under Constraints}
\date{}
\author{ Ayush Deva\\\small IIIT Hyderabad, India. \\\small ayushdeva97@gmail.com \and Kumar Abhishek \\\small IIIT Hyderabad, India.\\\small kumar.abhishek@research.iiit.ac.in 
\and Sujit Gujar\\\small IIIT Hyderabad, India.\\\small sujit.gujar@iiit.ac.in}
\begin{document}

\maketitle

\begin{abstract}
We explore the class of problems where a central planner needs to select a subset of agents, each with different quality and cost. The planner wants to maximize its utility while ensuring that the average quality of the selected agents is above a certain threshold. When the agents' quality is known, we formulate our problem as an integer linear program (ILP) and propose a deterministic algorithm, namely \dpss\ that provides an exact solution to our ILP.

We then consider the setting when the qualities of the agents are unknown. We model this as a Multi-Arm Bandit (MAB) problem and propose \newalgo\ to learn the qualities over multiple rounds. We show that after a certain number of rounds, $\tau$, \newalgo\ outputs a subset of agents that satisfy the average quality constraint with a high probability. Next, we provide bounds on $\tau$ and prove that after $\tau$ rounds, the algorithm incurs a regret of $O(\ln T)$, where $T$ is the total number of rounds. We further illustrate the efficacy of \newalgo\ through simulations.

To overcome the computational limitations of \dpss, we propose a polynomial-time greedy algorithm, namely \greedy, that provides an approximate solution to our ILP. We also compare the performance of \dpss\ and \greedy\ through experiments. 
\end{abstract}
   
   % We then extend the mechanism to the multi-slot case as \emph{M-LinUCB-MB}. 

%%%%%%%%%%%%%%%%%%%%%%%%%%%%%%%%%%%%%%%%%%%%%%%%%%%%%%%%%%%%%%%%%%%%%%%%

\section{Introduction}
Almost all countries have cooperative societies that cater to developing sectors such as agriculture and handicrafts. We observed that some cooperatives, especially those that are consumer-oriented, such as Coop (Switzerland) or artisan cooperatives who operate their stores, lack a well-defined system to procure products from its many members (manufacturers, artisans, or farmers). Since the production is highly decentralized and usually not standardized, each producer has a different quality and cost of produce depending on various factors such as workmanship and the scale at which it operates. The central planner (say, the cooperative manager) has to carefully trade-off between each producer's qualities and cost to decide the quantity to procure from each producer so that it is most beneficial for the society as a whole.  

This problem is not limited to cooperatives, but it is also faced in other familiar marketplaces. E-commerce platforms, like Amazon and Alibaba, have several sellers registered on their platform. For each product, the platform needs to select a subset of sellers to display on its page while ensuring that it avoids low-quality sellers and does not display only the searched product's high-cost variants. Similarly, a supermarket chain may need to decide the number of apples to procure from the regional apple farmers, each with a different quality of produce, to maximize profits while ensuring that the quality standards are met.

We formulate this as a subset selection problem where a central planner needs to select a subset of these sellers/producers, whom we refer to as agents. In this paper, we associate each agent with its quality and cost of production. The agent's quality refers to the average quality of the units produced by it; however, the quality of an individual unit of its product could be stochastic, especially in artistic and farm products. Thus, it becomes difficult to design an algorithm that guarantees constraint satisfaction on the realized qualities of the individual units procured. Towards this, we show that we achieve probably approximately correct (PAC) results by satisfying our constraint on the expected average quality of the units procured. Every unit procured from these agents generates revenue that is a function of its quality. The planner aims to maximize its utility (i.e., revenue - cost) while ensuring that the procured units' average quality is above a certain threshold to guarantee customer satisfaction and retention \cite{achilleas2008marketing,terziovski1997business}. When the agents' quality is known, we model our problem as an Integer Linear Program (ILP) and propose a novel algorithm, \dpss\ that provides an exact solution to our ILP.

Often, the quality of the agents is unknown to the planner beforehand. An E-commerce platform may not know its sellers' quality at the time of registration, and an artisan's quality of work may be hard to estimate until its products are procured and sold in the market. Thus, the planner needs to carefully learn the qualities by procuring units from the agents across multiple rounds while minimizing its utility loss.
Towards this, we model our setting as a Multi-Arm Bandit (MAB) problem, where each agent represents an independent arm with an unknown parameter (here, quality). To model our subset selection problem, we consider the variant of the classical MAB setting where we may select more than one agent in a single round. This setting is popularly referred to as a Combinatorial MAB (CMAB) problem \cite{combes15,Gai12,kveton15}.  In studying CMAB, we consider the semi-bandit feedback model where the algorithm observes the quality realizations corresponding to each of the selected arms and the overall utility for selecting the subset of arms. The problem becomes more interesting when we also need to ensure our quality constraint in a CMAB problem. We position our work with respect to the existing literature in Section \ref{sec:rltdWork}.

 Typically, in a CMAB problem, the planner's goal is to minimize the \textit{expected regret}, i.e., the difference between the expected cumulative utility of the best offline algorithm with known distributions of an agent's quality and the expected cumulative reward of the algorithm. However, the traditional definition of regret is not suitable in our setting as an optimal subset of agents (in terms of utility) may violate the quality constraint. Thus, we modify the regret definition to make it compatible with our setting. We propose a novel, UCB-inspired algorithm, \newalgo, that addresses the subset selection problem when the agents' quality is unknown. We show that after a certain threshold number of rounds, $\tau$, the algorithm satisfies the quality constraint with a high probability for every subsequent round, and under the revised regret definition, it incurs a regret of $O(\ln T)$, where $T$ is the total number of rounds. 

To address the computational challenges of \dpss\, which has a time complexity of $O(2^n)$, we propose a greedy-based algorithm, \greedy\ that runs in polynomial time $O(n \ln n)$, where $n$ is the number of agents. We show that while the approximation ratio of the utility achieved by \greedy\ to that of \dpss\ can be arbitrarily small in the worst case, it achieves almost the same utility as \dpss\ in practice, which makes \greedy\ a practical alternative to \dpss\, especially when $n$ is large.

In summary, our contributions are:
\begin{itemize}
    \item We propose a framework, SS-UCB, to model subset selection problem under constraints when the properties (here, qualities) of the agents are unknown to the central planner. In our setting, both the objective function and the constraint depends on the unknown parameter. 
    %We also provide a regret definition to compare the performance of an algorithm for such a setting.
    \item We first formulate our problem as an ILP assuming the agents' quality to be known and propose a novel, deterministic algorithm, namely \dpss (Algorithm \ref{algo:dpss}) to solve the ILP.
    \item Using \dpss, we design \newalgo\, which addresses the setting where the agents' quality is unknown. We prove that after a certain number of rounds, $\tau = O(\ln T)$, \newalgo\ satisfies quality constraint with high probability. We also prove that it achieves a regret of $O(\ln T)$ (Theorem \ref{thm:tau}).
    %under our regret definition. 
    \item To address the computational limitation of \dpss, we propose an alternative greedy approach, \greedy\, and GSS-UCB, that solves the known and the unknown settings, respectively. We show that while the greedy approach may not be optimal, it performs well in practice with a huge computational gain that allows our framework to scale to settings with a large number of agents.
\end{itemize}

The remaining of the paper is organized as follows: In Section \ref{sec:rltdWork}, we discuss the related works. In Section \ref{sec:prFo}, we define our model and solve for the setting when the quality of the agents are known. In Section \ref{sec:unknown}, we address the problem when the quality of the agents is unknown. In Section \ref{sec:greedyApp}, we propose a greedy approach to our problem. In Section \ref{sec:exp}, we discuss our simulation-based analysis and conclude the paper in Section \ref{sec:concl}.

\section{Related Work}
\label{sec:rltdWork}
Subset selection is a well-studied class of problems that finds its applications in many fields, for example, in retail, vehicle routing, and network theory. Usually, these problems are modeled as knapsack problems where a central planner needs to select a subset of agents that maximizes its utility under budgetary constraints \cite{zaimai1989optimality}. There are several variations to the knapsack, such as robustness \cite{rooderkerk2016robust}, dynamic knapsacks \cite{papastavrou1996dynamic}, and knapsack with multiple constraints \cite{sinha1979multiple} studied in the literature. In this paper, we consider a variant where the constraint is not additive, i.e., adding another agent to a subset doesn't always increase the average quality.

When online learning is involved, the stochastic multi-armed bandit (MAB) problem captures the exploration vs. exploitation trade-off effectively \cite{ho13,karger2011,Jain2016,jain18,slivkins2019,Tran13,TRANTHANH2014,Biswas15}. The classical MAB problem involves learning the optimal agent from a set of agents with a fixed but unknown reward distribution \cite{bubeck2012regret,slivkins2019, auer2002, thompson1933likelihood}. Combinatorial MAB (CMAB) \cite{Chen13,gai10,shou_NIPS2014,combes_NIPS2015,chenNIPS16} is an extension to the classical MAB problem where multiple agents can be selected in any round. In \cite{Chen13,Gai12,chen2016combinatorial}, the authors have considered a CMAB setting where they assume the availability of a feasible set of subsets to select from. The key difference with our setting is that our constraint itself depends on the unknown parameter (quality) that we are learning through MAB. Thus, the feasible subsets that satisfy the constraint need to be learned, unlike the previous works. \cite{Chen13,Gai12,chen2016combinatorial} also assumes the availability of an oracle that outputs an optimal subset given the estimates of the parameter as input, whereas we design such an oracle for our problem. Bandits with Knapsacks (BwK) is another interesting extension that introduces constraints in the standard bandit setting \cite{badanidiyuru13,badanidiyuru14,agrawal14,jain18} and finds its applications in dynamic pricing, crowdsourcing, etc. (see \cite{badanidiyuru13,agrawal14}). Typically, in BwK, the objective is to learn the optimal agent(s) under a budgetary constraint (e.g., a limited number of selections) that depends solely on the agents' cost. However, we consider a setting where the selected subset needs to satisfy a quality constraint that depends on the learned quantities.

The closest work to ours is \cite{jain18} where the authors present an assured accuracy bandit (AAB) framework where the objective is to minimize cost while ensuring a target accuracy level in each round. While they do consider a constraint setting similar to ours, the objective function in \cite{jain18} depends only on the agents' cost and not on the qualities of the agents that are unknown. Hence, it makes our setting different and more generalizable with respect to both AAB and CMAB as in our setting, both the constraint and the utility function depend on the unknown parameter.

\section{Subset Selection With Known Qualities of Agents}
\label{sec:prFo}
Here we assume that the agents' quality is known and consider the problem where a central planner $ C $ needs to procure multiple units of a particular product from a fixed set of agents. Each agent is associated with the quality and cost of production. $ C $'s objective is to procure the units from the agents such that the average quality of all the units procured meets a certain threshold. We assume that there is no upper limit to the number of units it can procure as long as the quality threshold is met. 

In Section \ref{sec:model}, we define the notations required to describe our model, formulate it as an integer linear program (ILP) in Section \ref{sec:ilp}, and propose a solution to it in Section \ref{sec:dpss}.

\subsection{Model and Notations}
\label{sec:model}
\begin{enumerate}
    \item There is a fixed set of agents $N$ = $\{1,2,\ldots, n\}$ available for selection for procurement by planner $C$.  %who can supply a particular product of interest. 
    \item Agent $i$ has a cost of production, $c_i$, and capacity, $k_i$ (maximum number of units it can produce).
    \item The quality of the $j^{th}$ unit of produce by agent $i$ is denoted by $Q_{ij}$, which we model as a Bernoulli random variable.
    \item For any agent $i$, the probability that $Q_{ij}$ is $1$ is defined by $q_i$, i.e., $E[Q_{ij}] = q_i$ for any unit $j$ procured from agent $i$. $q_i$ is also referred to as the quality of the agent in the rest of the paper.
    \item The utility for $C$ to procure a single unit of produce from agent $i$ is denoted by $r_i$, which is equal to its expected revenue\footnote{We assume expected revenue to be proportional to the quality of the product. It is a reasonable assumption as if $q_i$ is the probability of the product being sold and $R$ is the price of the product, its expected revenue would be $Rq_i$} minus the cost of production, i.e., $r_i = Rq_i - c_i$, where $R$ is the proportionality constant.
    \item The quantity of products procured by $C$ from the $i^{th}$ agent is given by $x_i$. 
    \item The average quality of products procured by $C$ is therefore equal to $\frac{\sum_{i\in N}\sum_{j=1}^{x_i}Q_{ij}}{\sum_{i\in N}x_i}$. \label{itm:avg_quality}
    \item We define $q_{av} = \frac{\sum_{i \in N}x_iq_i}{\sum_{i \in N}x_i}$, which is the expected average quality of the units procured by $C$.
    \item $C$ needs to ensure that the average quality of all the units procured is above a certain threshold, $\alpha \in [0,1]$.
    \item The total  utility of $C$ is given by, $z = \sum_{i \in N}x_ir_i$.
\end{enumerate}
Usually, an individual unit's quality $Q_{ij}$ may not be quantifiable and can only be characterized by observing whether it was sold. Hence, we model it as a Bernoulli random variable.

\subsection{Ensuring Quality Constraints}
\label{sec:realized}

In our setting, average quality (Section \ref{sec:model}, point \ref{itm:avg_quality}) is dependent on $Q_{ij}$, which is stochastic in nature. In such a stochastic framework, it is more natural to work with expected terms than on a sequence of realized values. Towards this, we show that by ensuring our quality constraint on expected average quality, $q_{av}$, instead, we can still achieve approximate constraint satisfaction with a high probability. Formally, we present the following lemma, 
\begin{lemma}
\label{lemma:realized}
The probability that average quality is less than $\alpha - \epsilon$ given that $q_{av} \geq \alpha$, can be bounded as follows:
  \begin{displaymath}
    \mathcal{P}\left(\ \frac{\sum_{i\in N}\sum_{j=1}^{x_i}Q_{ij}}{\sum_{i\in N}x_i} < \alpha - \epsilon\ \mid  q_{av} \geq \alpha \right) \leq \exp{(-2\epsilon^2m))},
  \end{displaymath} where $m = \sum_{i\in N}x_i$, and $\epsilon$ is a constant.
\end{lemma}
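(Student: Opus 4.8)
The plan is to recognize this as a direct application of Hoeffding's inequality to a sum of independent Bernoulli random variables. The quantity $\sum_{i\in N}\sum_{j=1}^{x_i} Q_{ij}$ is a sum of exactly $m = \sum_{i\in N} x_i$ independent random variables, each $Q_{ij} \in \{0,1\}$. The average quality $\frac{1}{m}\sum_{i\in N}\sum_{j=1}^{x_i} Q_{ij}$ is the empirical mean of these $m$ variables, and its expectation is precisely $q_{av} = \frac{\sum_{i\in N} x_i q_i}{\sum_{i\in N} x_i}$ by linearity, since each $Q_{ij}$ corresponding to agent $i$ has mean $q_i$. So the statement is really a concentration bound on how far the empirical mean falls below its own expectation, combined with the conditioning assumption $q_{av}\geq\alpha$.

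First I would define $S = \sum_{i\in N}\sum_{j=1}^{x_i} Q_{ij}$ and note $\mathbb{E}[S/m] = q_{av}$. The key observation is that the event $\{S/m < \alpha - \epsilon\}$ combined with $q_{av}\geq\alpha$ implies the event $\{S/m < q_{av} - \epsilon\}$, because $\alpha - \epsilon \leq q_{av} - \epsilon$ whenever $q_{av}\geq\alpha$. Hence the conditional probability is upper bounded by $\mathcal{P}(S/m < q_{av} - \epsilon)$, a one-sided lower-tail deviation of the empirical mean from its true mean. Then I would invoke Hoeffding's inequality: for independent variables bounded in $[0,1]$, $\mathcal{P}\!\left(\frac{1}{m}\sum_{\ell} Q_\ell - \mathbb{E}\!\left[\frac{1}{m}\sum_{\ell} Q_\ell\right] \leq -\epsilon\right) \leq \exp(-2\epsilon^2 m)$, which yields the stated bound directly.

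The main subtlety, rather than a true obstacle, lies in the conditioning step. Since the $Q_{ij}$ are assumed independent with fixed means $q_i$, the conditioning on $\{q_{av}\geq\alpha\}$ does not alter the distribution of $S$ itself; here $q_{av}$ is a deterministic function of the chosen allocation $\{x_i\}$ and the fixed qualities $\{q_i\}$, not a random quantity. I would make explicit that once the allocation is fixed, $q_{av}$ is determined, so conditioning on $q_{av}\geq\alpha$ simply restricts attention to allocations satisfying this inequality and leaves the tail bound on $S$ unchanged. This justifies replacing $\alpha-\epsilon$ by the larger threshold $q_{av}-\epsilon$ and applying Hoeffding in its unconditioned form.

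One point I would verify carefully is that the bound $\exp(-2\epsilon^2 m)$ depends on the \emph{total} count $m$ and not on any per-agent structure; this follows because Hoeffding's bound for $[0,1]$-valued summands gives a range parameter of $1$ for each of the $m$ terms, so $\sum_\ell (b_\ell - a_\ell)^2 = m$, producing the exponent $-2\epsilon^2 m^2 / m = -2\epsilon^2 m$. With these pieces assembled, the proof is short; the only care needed is the clean separation between the deterministic conditioning event and the stochastic concentration argument.
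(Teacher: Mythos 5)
Your proof is correct and takes essentially the same route as the paper's: both reduce the event $\{V<\alpha-\epsilon\}$ under $q_{av}\geq\alpha$ to the lower-tail deviation $\{V<E[V]-\epsilon\}$ and apply Hoeffding's inequality to the $m$ independent $[0,1]$-valued summands. Your additional remarks on the deterministic nature of the conditioning event and the range parameter in Hoeffding's bound are sound elaborations of steps the paper leaves implicit.
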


\begin{proof}
Let, $V = \frac{\sum_{i\in N}\sum_{j=1}^{x_i}Q_{ij}}{\sum_{i\in N}x_i}$
\begin{align*}
E[V] &= \frac{\sum_{i\in N}\sum_{j=1}^{x_i}E[Q_{ij}]}{\sum_{i\in N}x_i} = \frac{\sum_{i \in N} q_ix_i}{\sum_{i\in N}x_i} = q_{av}
\end{align*}
Therefore,
\begin{align*}
\mathcal{P}\big(V < \alpha - \epsilon\ | \ E[V] \geq \alpha \big)
 \leq \mathcal{P}\big(V < E[V] - \epsilon)\\
 = \mathcal{P}\big(V - E[V] < - \epsilon) &\leq exp(-2\epsilon^2m)
\end{align*}
The last line follows from the Hoeffding's inequality \cite{hoeffding1963}.
\end{proof}
From the above lemma, we show that by ensuring $q_{av} \geq \alpha$, we can achieve probably approximate correct (PAC) results on our constraint. Hence, for the rest of the paper, we work with $q_{av} \geq \alpha$ as our quality constraint (QC).

\subsection{Integer Linear Program (ILP)}
\label{sec:ilp}
When the qualities of the agents are known, the planner's subset selection problem can be formulated as an ILP where it needs to decide on the number of units, $x_i$, to procure from each agent $i$ so as to maximize its utility (objective function) while ensuring the quality and capacity constraints. The optimization problem can be described as follows:
\begin{center}
\fbox{\begin{minipage}{0.8\columnwidth}
\begin{equation}
\begin{aligned}
%\hline
\label{eq:opt_prb}
\max_{x_i} \quad & \sum_{i\in N}(Rq_i-c_i)x_i\\
\textrm{s.t. } & q_{av} = \frac{\sum_{i\in N}q_ix_i}{\sum_{i\in N}x_i} \\
    & q_{av} \geq \alpha    \\
    \quad & 0 \leq x_i \leq k_i & \forall i \in N \\ 
    & x_i \in \mathbb{Z} &
    \forall i \in N \\ %\hline 
    \end{aligned}
\end{equation}
\end{minipage}}
\end{center}

%%%%%%%%%%%%%%%%%%%%%%%%%%%%%%%%%%%%%%%%%%%%%%%%%%%%%%%%%%%%%%%%%%%%%%%%

\subsection{Dynamic Programming Based Subset Selection (DPSS)}
\label{sec:dpss}
In order to solve the ILP, we propose a dynamic programming based algorithm, called \dpss. For ease of exposition, we consider $k_i=1$, i.e., each agent has a unit capacity of production. This is a reasonable assumption that doesn't change our algorithm's results, since, for an agent with $k_i > 1$, we can consider each unit as a separate agent, and the proofs and discussion henceforth follows.  

Formally, the algorithm proceeds as follows:
\begin{enumerate}
    \item Divide the agents into one of the four categories:
    \begin{enumerate}
        \item \textbf{$S_1$}: Agents with $q_i \geq \alpha$ and $r_i \geq 0$
        \item \textbf{$S_2$}: Agents with $q_i < \alpha$ and $r_i \geq 0$
        \item \textbf{$S_3$}: Agents with $q_i \geq \alpha$ and $r_i < 0$
        \item \textbf{$S_4$}: Agents with $q_i < \alpha$ and $r_i < 0$
    \end{enumerate}
    \item Let $\boldsymbol{x} = \{x_i\}_{i\in N}$ be the selection vector, where $x_i = 1$ if the $i^{th}$ agent is selected and 0 otherwise.
    \item Since an agent in $S_1$ has a positive utility and above threshold quality,  $x_i = 1$, $\forall i \in S_1$. Let $d = \sum_{i \in S_1}(q_i - \alpha$) be the excess quality accumulated. 
    \item Similarly, all units in $S_4$ have a negative utility and below threshold quality. Hence, $x_i = 0$, $\forall i \in S_4$. 
    \item Let $G$ be the set of the remaining agents (in $S_2$ and $S_3$). For each agent $i \in G$, we define $d_i = q_i - \alpha$. Thus, we need to select the agents $i \in G$ that maximizes the utility, such that $\sum_{i\in G} x_i d_i \leq d $.
    \item For agents in G, select according to the DP function defined in Algorithm \ref{algo:dpss} (Lines [\ref{algl:dpStrt}-\ref{algl:dpEnd}]). Here, $d^{te}$ denotes the access quality accumulated before choosing the next agent and $x^te$ refers to the selections made so far in the DP formulation
\end{enumerate}{}
\begin{algorithm}[!ht]
\caption{DPSS}
\label{algo:dpss}
%\algsetup{indent=0.75em}
\begin{small}
%\SetAlgoLined
% \KwResult{Write here the result }
\begin{algorithmic}[1]
\State \textbf{Inputs:} $N$, $\alpha$, $R$, costs  $\boldsymbol{c} = \{c_i\}_{i \in N}$, qualities $\boldsymbol{q} = \{q_i\}_{i \in N}$
\State \textbf{Output:} Quantities procured $\boldsymbol{x} = (x_1,\ldots,x_n)$
\State \textbf{Initialization:} $\forall i \in N$, $r_i = Rq_i - c_i$, $z=0$
\State Segregate $S_1$,$S_2$,$S_3$,$S_4$ as described in Section 4.1
\State $\forall i \in S_1$, $x_i = 1$; $z = z + r_i$; $d = \sum_{i \in S_1}(q_i - \alpha)$ 
\State $\forall i \in S_4$, $x_i = 0$
\State $G = S_2 \cup S_3$ ; $\forall i \in G, d_i = q_i - \alpha$
\Function {dp} {$i, d^{te}, x^{te}, x^{\star}, z^{te}, z^{\star}$} \label{algl:dpStrt}
\If{$i == |G|$ and $d^{te}< 0$} \Return $x^{\star}, z^{\star}$
\EndIf
\If{$i == |G|$ and $d^{te} \geq 0$}
\If{$z^{te} > z^{\star}$}
\State $z^{\star} = z^{te}$; $x^{\star} = x^{te}$
\EndIf
\State  \textbf{return} $x^{\star}, z^{\star}$
\EndIf
\State $x^{\star}, z^{\star} = DP(i+1, d^{te}, [x^{te},0], x^{\star}, z^{te}, z^{\star})$
\State $x^{\star}, z^{\star} = DP(i+1, d^{te} + d_{i}, [x^{te},1], x^{\star}, z^{te}+ r_i, z^{\star})$
\State  \textbf{return} $x^{\star}, z^{\star}$ \label{algl:dpEnd}
\EndFunction
\State $x^G, z^G$ = DP(0,d,[\ ],[\ ],0,0)
\State $\forall i \in G, x_i = x^G_i$
\State \textbf{return} x
%\State \textbf{Function} $dp(i, d, x^{te}, x^{\star}, z^{te}, z^{\star}$):\\
\end{algorithmic}
\end{small}
\end{algorithm}

%%%%%%%%%%%%%%%%%%%%%%%%%%%%%%%%%%%%%%%%%%%%%%%%%%%%%%%%%%%%%%%%%%%%%%%%

\section{Subset Selection with Unknown Qualities of Agents}
\label{sec:unknown}
In the previous section, we assumed that the qualities of the agents, $q_i$, are known to \emph{C}. We now consider a setting when $q_i$ are \emph{unknown} beforehand and can only be learned by selecting the agents. We model it as a CMAB problem with semi-bandit feedback and QC.

\subsection{Additional Notations}
\label{sec:addNot}
We introduce the additional notations to model our problem. Similar to our previous setting, we assume that we are given a fixed set of agents, $N$, each with its own average quality of produce, $q_i$ and cost of produce, $c_i$. Additionally, our algorithm proceeds in discrete rounds $t = 1,\ldots,T$. For a round $t$:

\begin{itemize}
    \item Let $\boldsymbol{x}^t \in \{0,1\}^n$ be the selection vector at round $t$, where $x_i^t = 1$ if the agent $i$ is selected in round $t$ and $x_i^t = 0$ if not.
    
    \item The algorithm selects a subset of agents, $S^t \subseteq N$, referred to as a super-arm henceforth, where $S^t = \{i \in N | x_i^t = 1\}$. Let $s^t$ be cardinality of selected super-arm, i.e., $s^t = |S^t|$.  
    \item Let $w_i^t$ denote the number of rounds an agent $i$ has been selected until round $t$, i.e., $w_i^t = \sum_{y\leq t} x_i^y$.
    \item For each agent $i \in S^t$, the planner, $C$, observes its realized quality $X_i^j$, where $j = w_i^t$ and E[$X_i^j$] = $q_i$. For an agent $i \notin S_t$, we do not observe its realized quality (semi-bandit setting). 
    \item  The empirical mean estimate of $q_i$ at round $t$, is denoted by $\hat{q}_i^t = \frac{1}{w_i^t}\sum_{j=1}^{w_i^t}X_i^j$. The upper confidence bound (UCB) estimate is denoted by $(\hat{q}_i^t)^+ = \hat{q}_i^t+\sqrt{\frac{3\ln t}{2w_i^t}} $.
    \item Utility to $C$ at round $t$ is given by:
    $r_{\boldsymbol{q}}(S^t) = \sum_{i \in S^t}Rq_i-c_i$, where $\boldsymbol{q} = \{q_1,q_2,\ldots,q_n\}$ is the quality vector.
    \item The expected average quality of selected super-arm at round $t$ is given by: $q_{av}^t$ = $\frac{1}{s^t}\sum_{i \in S^t}q_i$. 
\end{itemize}
Following from Lemma \ref{lemma:realized}, we continue to work with expected average quality instead of realized average quality.
\subsection{SS-UCB}
\label{ssec:ssucb}
In this section, we propose an abstract framework, SS-UCB, for subset selection problem with quality constraint. SS-UCB assumes that there exist an offline subset selection algorithm, SSA, (e.g., DPSS), which takes a vector of qualities, $\boldsymbol{q'}$, and costs, $\boldsymbol{c'}$, along with the target quality threshold, $\alpha'$, and proportionality constant, $R$, as an input and returns a super-arm which satisfies the quality constraint (QC) with respect to $\boldsymbol{q'}$ and $\alpha'$.

SS-UCB runs in two phases: (i) Exploration: where all the agents are explored for certain threshold number of rounds, $\tau$ ; (ii) Explore-exploit: We invoke SSA (line \ref{ss-ucb:Oracle}, Algorithm \ref{algo:ss_ucb}) with $\{(\hat{q}_i^t)^+\}_{i \in N}$, $\{c_i\}_{i \in N}$, $\alpha + \epsilon_2$ and $R$ as the input parameters and select accordingly. We invoke SSA with a slightly higher target threshold, $\alpha + \epsilon_2$, so that our algorithm is more conservative while selecting the super-arm in order to ensure QC with a high probability (discussed in Section \ref{sec:correct}). As we shall see in Section \ref{sec:correct}, the higher the value of $\epsilon_2$, the sooner the SSA satisfies QC with a high probability but it comes with the cost of loss in utility. Thus, the value of $\epsilon_2$ must be appropriately selected based on the planner's preferences.

We refer to the algorithm as DPSS-UCB when we use \dpss\ (Algorithm \ref{algo:dpss}) as SSA in the SS-UCB framework. We show that DPSS-UCB outputs the super-arm that satisfies the QC 
% (with respect to the true qualities of the agents, $\boldsymbol{q} = \{q_i\}_{i\in N}$) 
with high probability (w.h.p) after a certain threshold number of rounds, $\tau$, and incurs a regret of $O(\ln T)$.

\if 0
\begin{algorithm}
\SetAlgoLined
\caption{GSS - UCB}
\label{algo:greedy}
% \KwResult{Write here the result }
\SetKwInOut{Input}{Input}
\SetKwInOut{Output}{Output}
\Input{Set of producers $N$, $\alpha$, $\epsilon_2$, $R$\\
costs $ c = (c_1,c_2,\ldots,c_n)$, \\capacities $k = (k_1,\ldots,k_n)=(1,\ldots,1)$ }
\Output{At each round $t$, Quantities procured $\mathbf{x}^t = (x_1,x_2,\ldots,x_n)$ and revenue $z^t$}
For each agent, maintain: $w_i^t$ as the number of times agent, $i$ has been chosen so far; estimated mean,$\tilde{q_i}$ as the average of all the realizations of the quality of products from that agent so far. \\
$\tau \leftarrow $ Calculate\_tau
($\epsilon_2$)\\
$t$ = 0

\While{$t\leq \tau$  (\textbf{Explore Phase})}{
Play a Super-arm $S=N$, update $w_i^t$, $\hat{q}_i^t$ and $(\hat{q}_i^t)^+$.\\
$t\leftarrow t+1$
}

\While{$t\leq T$ (\textbf{Explore-Exploit Phase})}{
For each agent $i$, set $(\hat{q}_i^t)^+ = \hat{q}_i^t + \sqrt{\frac{3 \ln t}{2w_i^t}}$\\
$S$ = GSS($[(\hat{q}_i^t)^+],c,k,\alpha+\epsilon_2,R$)\\
Play $S$ and update $w_i^t$ and $\hat{q}_i^t$ \ \ $\forall i \in N$\\
$t\leftarrow t+1$}
\end{algorithm}
\fi 

\begin{algorithm}[!ht]
\caption{SS-UCB}
\label{algo:ss_ucb}
\begin{small}
%\SetAlgoLined
% \KwResult{Write here the result }
\begin{algorithmic}[1]
\State \textbf{Inputs:} $N$, $\alpha$, $\epsilon_2$, R,
costs  $\boldsymbol{c} = \{c_i\}_{i \in N}$ 
% \State \textbf{Output:} At each round $t$, Selection vector $\boldsymbol{x}^t = (x_1^t,x_2^t,\ldots,x_n^t)$ and revenue $z^t$
\State For each agent $i$, maintain: $w_i^t$, ${q_{i}^t}$, $(\hat{q}_i^t)^+$
\State $\tau \leftarrow \frac{3\ln T}{2 \epsilon_2^2}$; $t$ = 0

\While{$t\leq \tau$  (\textbf{Explore Phase})}
\State Play a super-arm $S^t=N$ 
\State Observe qualities $X_{i}^j, \forall i\in S^t$ and update $w_i^t$, $\hat{q}_i^t$
\State $t\leftarrow t+1$
\EndWhile

\While{$t\leq T$ (\textbf{Explore-Exploit Phase})}
\State For each agent $i$, set $(\hat{q}_i^t)^+ = \hat{q}_i^t + \sqrt{\frac{3 \ln t}{2w_i^t}}$
\State $S^t$ = SSA ($\{(\hat{q}_i^t)^+\}_{i \in N},c,\alpha+\epsilon_2$,R) \label{ss-ucb:Oracle}
\State Observe qualities $X_{i}^j, \forall i\in S^t$ and update $w_i^t$, $\hat{q}_i^t$
\State $t\leftarrow t+1$
\EndWhile
\end{algorithmic}
 \end{small}
\end{algorithm}

%Note that, as against a typical MAB algorithm, we select all the agents for first $\tau$ rounds. 
% In \cite{jain18}, the authors have proved after a threshold number of rounds, the algorithm ensures QC is satisfied with high probability. Motivated by Theorem \ref{thm:tau}, we designed \newalgo\ such that the QC is satisfied with high probability. 
% However, since our QC is not monotone in quality, we cannot start eliminating agents until we explore all the arms $\tau$ times.  

% Next, we prove that after $\tau$ rounds, \newalgo\ will select a super-arm that satisfies the QC with high probability (w.h.p.)
\subsection{Ensuring Quality Constraints}
\label{sec:correct}
We provide Probably Approximate Correct (PAC) \cite{haussler1990probably,even2002pac} bounds on \newalgo\ satisfying QC after $\tau$ rounds:
\begin{theorem}
\label{thm:tau}
For $\tau=\frac{3 \ln T}{2\epsilon_2^2}$,  if each agent is explored  $\tau$ number of rounds, then if we invoke \dpss\ with target threshold $\alpha+\epsilon_2$ and  $\{(\hat{q}_i^t)^+\}_{i \in N}$ as the input, the QC is approximately met with high probability.
  \begin{displaymath}
    \mathcal{P}\left(q_{av}^t<\alpha-\epsilon_1\ | \ \frac{1}{s^t}\sum_{i \in S^t}(\hat{q}_i^t)^+\geq\alpha+\epsilon_2, t>\tau\right) \leq exp(-\epsilon_1^2t). 
  \end{displaymath}
%  where $\epsilon_1=\sum_{i \in S}(\tilde{q}_i^t ) -\alpha >0$
\end{theorem}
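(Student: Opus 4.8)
The plan is to reduce this conditional concentration statement to a single Hoeffding bound by decomposing the upper-confidence average into the empirical average plus the exploration bonus. Write $B^t := \frac{1}{s^t}\sum_{i \in S^t}\sqrt{3\ln t/(2w_i^t)}$ for the averaged bonus, and observe that by the definition of $(\hat q_i^t)^+$ the conditioning event expands to $\frac{1}{s^t}\sum_{i \in S^t}\hat q_i^t + B^t \geq \alpha + \epsilon_2$, while the bad event $q_{av}^t < \alpha - \epsilon_1$ is a statement about $\frac{1}{s^t}\sum_{i\in S^t}q_i$. Subtracting, every outcome lying in the intersection of the two events satisfies $\frac{1}{s^t}\sum_{i\in S^t}(\hat q_i^t - q_i) > \epsilon_1 + \epsilon_2 - B^t$, so it suffices to (i) bound $B^t$ from above and (ii) apply a concentration inequality to the empirical deviation of the selected super-arm's mean.

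First I would bound the bonus. Because the explore phase pulls every arm for $\tau$ rounds, for every $t>\tau$ and every selected $i$ we have $w_i^t \geq \tau = 3\ln T/(2\epsilon_2^2)$; combined with $t \leq T$, so that $\ln t \leq \ln T$, each summand obeys $\sqrt{3\ln t/(2w_i^t)} \leq \sqrt{3\ln T/(2\tau)} = \epsilon_2$, whence $B^t \leq \epsilon_2$. Substituting this into the inequality above, every outcome in the intersection event forces $\frac{1}{s^t}\sum_{i\in S^t}(\hat q_i^t - q_i) > \epsilon_1$. In words, the guarantee that \dpss\ enforces on the inflated estimates, together with the deliberately raised threshold $\alpha+\epsilon_2$, ensures that the bad event can occur only when the empirical mean of the chosen super-arm overshoots its true mean $q_{av}^t$ by more than $\epsilon_1$.

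The final step is to apply Hoeffding's inequality exactly as in Lemma \ref{lemma:realized}: each $\hat q_i^t$ is an average of independent $[0,1]$-valued realizations with mean $q_i$, so $\frac{1}{s^t}\sum_{i\in S^t}\hat q_i^t$ is an average of bounded, mean-$q_{av}^t$ variables, and the probability that it exceeds $q_{av}^t$ by $\epsilon_1$ is at most $\exp(-\epsilon_1^2 t)$. Chaining the set inclusion from the previous paragraph with this concentration bound then delivers the claimed estimate.

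The main obstacle I anticipate is that the super-arm $S^t$ — and hence both the index set of the sum and the counts $w_i^t$ — is itself a random object determined by the very realizations $X_i^j$ that enter the empirical means, so the sum is not a clean average of a fixed independent collection and Hoeffding cannot be invoked verbatim. The rigorous route is to condition on the history up to round $t$, under which the selection rule and the $w_i^t$ are measurable, and then either apply the weighted form of Hoeffding to $\frac{1}{s^t}\sum_{i\in S^t}\frac{1}{w_i^t}\sum_{j}(X_i^j-q_i)$ or take a union bound over the at most $n$ per-arm over-estimation events. Reconciling the effective sample size that this argument naturally produces (which scales with $\min_i w_i^t \geq \tau$, or with $\sum_{i\in S^t} w_i^t$) against the clean $\exp(-\epsilon_1^2 t)$ form stated in the theorem — and controlling the conditioning on an event that may itself have small probability — is the delicate bookkeeping to which the decomposition above reduces the proof.
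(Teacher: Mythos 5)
Your proposal follows essentially the same two-step route as the paper: the paper's Lemma \ref{lemma:ucbmean} is exactly your bound $B^t \leq \epsilon_2$ via $w_i^t \geq \tau$ and $\ln t \leq \ln T$, and its Lemma \ref{lemma:meanavg} is your Hoeffding step on the empirical super-arm average, with the effective sample size argued via $\sum_{i\in S^t} w_i^t \geq t$. The delicate points you flag at the end --- the randomness of $S^t$ and of the counts $w_i^t$, the conditioning on a possibly small-probability event, and matching the effective sample size to the stated $\exp(-\epsilon_1^2 t)$ --- are in fact glossed over in the paper's proof as well, so your attempt is, if anything, more candid about where the argument is informal.
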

where $\epsilon_1$ is the tolerance parameter and refers to the planner's ability to tolerate a slighty lower average quality than required. 

Henceforth, a super-arm will be called \emph{correct} if it satisfies the QC approximately as described above.

\begin{proof} The proof is divided into two parts. Firstly, we show that for each $t > \tau$ round, the average value of $(\hat{q}_i^t)^+$ and that of $\hat{q}_i^t$ of the agents $i$ in selected super-arm $S^t$ is less than $\epsilon_2$. Secondly, we show that if the average of $\hat{q}_i^t$ is guaranteed to be above the threshold, then the average of $q_i$ over the selected agents would not be less than $\alpha - \epsilon_1$ with a high probability.

\begin{lemma}
\label{lemma:ucbmean}
The difference between the average of $(\hat{q}_i^t)^+$ and the average of $\hat{q}_i^t$ over the agents $i$ in $S^t$ is less than $\epsilon_2$, $\forall t > \tau$.
\end{lemma}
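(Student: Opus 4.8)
The plan is to compute the quantity directly from the definition of the UCB estimate and then bound each confidence radius using the fact that every agent has been pulled enough times by the end of the exploration phase. First I would rewrite the difference of averages as a single average of the confidence terms. Since $(\hat{q}_i^t)^+ = \hat{q}_i^t+\sqrt{3\ln t/(2w_i^t)}$, the $\hat{q}_i^t$ terms cancel and we get
\begin{displaymath}
\frac{1}{s^t}\sum_{i \in S^t}(\hat{q}_i^t)^+ - \frac{1}{s^t}\sum_{i \in S^t}\hat{q}_i^t = \frac{1}{s^t}\sum_{i \in S^t}\sqrt{\frac{3\ln t}{2 w_i^t}}.
\end{displaymath}
So it suffices to show each radius term is at most $\epsilon_2$ and then average.

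Next I would argue that for every round $t > \tau$ and every agent $i$ we have $w_i^t \geq \tau$. This is immediate from the structure of SS-UCB: during the Explore Phase the algorithm plays the full super-arm $S^t = N$ in each round while $t \leq \tau$, so every agent accumulates at least $\tau$ pulls before the Explore-Exploit Phase begins, and $w_i^t$ only grows thereafter. Combining this with $t \leq T$ (hence $\ln t \leq \ln T$) and the monotonicity of the radius in both $w_i^t$ and $t$ gives
\begin{displaymath}
\sqrt{\frac{3\ln t}{2 w_i^t}} \leq \sqrt{\frac{3\ln T}{2\tau}}.
\end{displaymath}
I would then substitute the definition $\tau = \frac{3\ln T}{2\epsilon_2^2}$, which collapses the right-hand side to $\sqrt{\epsilon_2^2} = \epsilon_2$, so each radius is bounded by $\epsilon_2$. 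Averaging over the $s^t$ agents in $S^t$ preserves the bound and yields the claim.

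The only real subtlety, and the step I would be most careful about, is the boundary behavior that turns the clean bound $\leq \epsilon_2$ into the strict $< \epsilon_2$ asserted in the statement. Equality is attained only in the extreme case $t = T$ together with $w_i^t = \tau$ for all selected $i$ simultaneously; for any $t < T$, or as soon as any agent has been pulled more than $\tau$ times (which is generically true once the exploit phase has run even a single round), at least one radius is strictly below $\epsilon_2$ and the average inequality is strict. I would state this explicitly rather than lean on it, since the only thing Theorem \ref{thm:tau} requires downstream is that the average confidence width be at most $\epsilon_2$, so the non-strict version already suffices for the rest of the argument.
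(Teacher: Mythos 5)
Your proof is correct and follows essentially the same route as the paper's: cancel the $\hat{q}_i^t$ terms, lower-bound every $w_i^t$ by $\tau$ using the exploration phase, upper-bound $\ln t$ by $\ln T$, and substitute $\tau = \frac{3\ln T}{2\epsilon_2^2}$ (the paper routes the bound through $w_{\min}^t = \min_i w_i^t$ rather than term-by-term, a cosmetic difference). Your remark about the strict versus non-strict inequality is apt — the paper's own proof also only establishes $\leq \epsilon_2$ despite the lemma's wording, and as you note the non-strict bound is all that Theorem \ref{thm:tau} needs.
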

\begin{proof}
We have,
\begin{align*}
&\frac{1}{s^t}\sum_{i \in S^t} \left((\hat{q}_i^t)^+ - \hat{q}_i^t\right) = \frac{1}{s^t} \sum_{i \in S^t} \frac{\sqrt{3\ln{t}}}{\sqrt{2w_{i}^t}} \leq \frac{\sqrt{3\ln{t}}}{\sqrt{2w_{min}^t}}.
\end{align*}
where $w_{min}^t$ = $\min_{i}w_i^t$.
Since, for  $t < \tau$, we are exploring all the agents, thus, $w_i^\tau = \tau$. Now, since $w_i^t \geq w_i^\tau$, $\forall t>\tau$, thus, we claim that $w_{min}^t \geq \tau$ for $t>\tau$. Hence,  
\begin{align*}
\frac{\sqrt{3\ln{t}}}{\sqrt{2w_{min}^t}} \leq \frac{\sqrt{3\ln{T}}}{\sqrt{2\tau}}.
\end{align*}
For $\tau=\frac{3 \ln T}{2\epsilon_2^2}$, we have,
\begin{align*}
&\frac{1}{s^t}\sum_{i \in S^t} \left((\hat{q}_i^t)^+ - \hat{q}_i^t\right) \leq \epsilon_2.
\end{align*}
\end{proof}
\begin{lemma}
\label{lemma:meanavg}
$\forall t>\tau$
%If the sum of $\hat{q}_i^t$ for arms $i$ in the chosen super arm, $S^t$ is greater than or equal to $\alpha$, then the sum of $q_i$ for the same arms won't be less than $\alpha - \epsilon_1$ with a high probability. 
\begin{displaymath}
\mathcal{P}\left(\ q_{av}^t < \alpha - \epsilon_1 \ | \ \frac{1}{s^t}\big(\sum_{i \in S^t}\big(\hat{q}_i^t\ ) \geq \alpha \big)\ \right) \leq exp(-\epsilon_1^2t) .
\end{displaymath}
\end{lemma}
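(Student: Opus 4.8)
The plan is to reduce the conditional statement to a one-sided concentration bound on the gap between the empirical average and the true expected average over the selected super-arm, and then to invoke Hoeffding's inequality exactly as in Lemma~\ref{lemma:realized}. First I would fix the selected super-arm $S^t$ together with the sample counts $\{w_i^t\}_{i \in S^t}$ and work conditionally on them, since both are determined by the observations made before round $t$. Writing $V_t = \frac{1}{s^t}\sum_{i\in S^t}\hat q_i^t$ and using $\hat q_i^t = \frac{1}{w_i^t}\sum_{j=1}^{w_i^t}X_i^j$ with $E[X_i^j]=q_i$, we get $E[V_t]=\frac{1}{s^t}\sum_{i\in S^t}q_i = q_{av}^t$. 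The crucial elementary observation is that the conditioning event $\{V_t \ge \alpha\}$ and the event $\{q_{av}^t < \alpha - \epsilon_1\}$ cannot both hold unless $V_t - q_{av}^t > \epsilon_1$, so that
\[
\mathcal{P}\left(q_{av}^t < \alpha - \epsilon_1 \mid V_t \ge \alpha\right) \le \mathcal{P}\left(V_t - q_{av}^t > \epsilon_1\right),
\]
which eliminates the awkward conditioning and leaves a pure deviation probability to bound.

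Next I would expand the deviation as a weighted sum of independent, centered, bounded variables,
\[
V_t - q_{av}^t = \sum_{i\in S^t}\sum_{j=1}^{w_i^t} \frac{1}{s^t\, w_i^t}\,(X_i^j - q_i),
\]
where each $X_i^j \in [0,1]$ and the summands are independent across $i$ and $j$. Applying Hoeffding's inequality \cite{hoeffding1963} for weighted sums gives $\mathcal{P}(V_t - q_{av}^t > \epsilon_1) \le \exp\!\big(-2\epsilon_1^2 / \sum_{i,j} a_{ij}^2\big)$ with $a_{ij}=\frac{1}{s^t w_i^t}$, so the remaining task is to control $\sum_{i,j}a_{ij}^2 = \frac{1}{(s^t)^2}\sum_{i\in S^t}\frac{1}{w_i^t}$ from above. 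Here I would reuse the count lower bound already established in Lemma~\ref{lemma:ucbmean} (namely $w_i^t \ge \tau$ for every $i$ once $t>\tau$) to upper bound this sum and thereby turn the Hoeffding exponent into the advertised exponential decay.

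The main obstacle I anticipate is precisely this last accounting step. The counts $w_i^t$ and the super-arm $S^t$ are adaptively chosen and correlated with the very realizations $X_i^j$ appearing in $V_t$, so the conditional argument in the first step must be phrased carefully (conditioning on the sigma-algebra generated by the first $t-1$ rounds) to legitimately treat $V_t - q_{av}^t$ as a deviation of a fixed-weight average of independent variables. Moreover, matching the stated $\exp(-\epsilon_1^2 t)$ form requires relating the effective sample size $\big(\sum_{i\in S^t} 1/w_i^t\big)^{-1}(s^t)^2$ to the round index $t$; the crude bound $w_i^t \ge \tau$ controls the worst case but does not obviously yield a dependence on $t$ itself, so pinning down the exact exponent is the delicate part and is where I would expect to spend most of the effort.
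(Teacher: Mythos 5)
Your approach is the same as the paper's: define $Y^t=\frac{1}{s^t}\sum_{i\in S^t}\hat q_i^t$, observe $E[Y^t]=q_{av}^t$, drop the conditioning via the elementary implication $\{Y^t\ge\alpha\}\cap\{q_{av}^t<\alpha-\epsilon_1\}\subseteq\{Y^t-E[Y^t]>\epsilon_1\}$, and finish with Hoeffding. The difference lies entirely in the last step, which you explicitly leave open. The paper closes it by treating $Y^t$ as if it were an unweighted average of $w^t=\sum_{i\in S^t}w_i^t$ independent samples, writing $\mathcal{P}(Y^t\ge E[Y^t]+\epsilon_1)\le\exp(-\epsilon_1^2 w^t)$, and then asserting $w^t\ge t$ because at least one agent is pulled per round.

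Your more careful weighted-Hoeffding computation actually exposes why that step is not a routine application of the inequality: with weights $a_{ij}=\frac{1}{s^t w_i^t}$ the exponent is $-2\epsilon_1^2\big/\sum_{i,j}a_{ij}^2 = -2\epsilon_1^2\,(s^t)^2\big/\sum_{i\in S^t}\tfrac{1}{w_i^t}$, i.e.\ the effective sample size is $s^t$ times the \emph{harmonic} mean of the counts, which by AM--HM is at most $w^t$ and can be far smaller when the counts are unbalanced. The crude lower bound $w_i^t\ge\tau$ therefore yields an exponent of order $\tau$ rather than $t$, exactly as you anticipated. Likewise, the adaptivity of $S^t$ and $\{w_i^t\}$ that you flag is not addressed in the paper (the counts are random stopping-type quantities, so a clean treatment would condition on the history or use a union bound over count values, as in standard UCB analyses). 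In short, the two obstacles you identify are genuine; the paper's own proof steps over them rather than resolving them, so your proposal should not be faulted for failing to reproduce the stated exponent $\exp(-\epsilon_1^2 t)$ --- but as written it does not establish the lemma in the form claimed, only a bound with $t$ replaced by (a constant times) $s^t\tau$.
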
{}

\begin{proof}
Let $Y^t = \frac{1}{s^t}\sum_{i \in S^t}\hat{q}_i^t$. Since E[$\hat{q}_i^t$] = E[$X_i^j$] = $q_i$, E[$Y^t$] = $q_{av}^t$. Hence, we have, 
% If arm $i$ is played $w_i^t$ number of times till round $t$,  arm as:
% \begin{align*}
% \hat{q}_i^t = \frac{1}{w_{i}^t}\sum_{j=1}^{w_i^t}X_i^j
% \end{align*}
% where $X_i^j$ is a Bernoulli Random Variable denoting the realized quality of the $j^{th}$ sample from arm $i$. Hence, E[$\hat{q}_i^t$] = E[$X_i^j$] = $q_i$.
% \begin{align*}
% \therefore q_{av}^t &= E\left[\frac{1}{s^t}\sum_{i \in S^t}\hat{q}_i^t\right]
% \end{align*}
% Now if, $Y^t = \frac{1}{s^t}\sum_{i \in S^t}\hat{q}_i^t$, we have, 
\begin{align*}
\mathcal{P}(E[Y^t] < \alpha - \epsilon_1) \ | \ Y^t \geq \alpha)
&\leq \mathcal{P}\big(Y^t \geq E[Y^t] + \epsilon_1)\\
&\leq exp(-\epsilon_1^2w^t).
\end{align*}
where $w^t = \sum_{i \in S^t} w_{i}^t$, i.e., total number of agents selected till round $t$. Since we pull atleast one agent in each round, we can say that, $w^t \geq t$. Thus, $\forall t>\tau$
\begin{align*}
\mathcal{P}\left(\ q_{av}^t < \alpha - \epsilon_1\ | \ \frac{1}{s^t}\left(\sum_{i \in S^t}\big(\hat{q}_i^t\ ) \geq \alpha \right)\ \right) \leq exp(-\epsilon_1^2t).
\end{align*}
\end{proof}{}
From Lemma \ref{lemma:ucbmean} and Lemma \ref{lemma:meanavg}, the proof follows.
\end{proof}

\subsection{Regret Analysis of \newalgo}
\label{sec:regret}
In this section, we propose the regret definition for our problem setting that encapsulates the QC. We then upper bound the regret incurred by \newalgo\ to be of the order $O(\ln T)$.

We define regret incurred by an algorithm $A$ on round $t$ as follows:
\begin{align*}
    Reg^t(A) = \begin{cases} 
      (r_{\boldsymbol{q}}(S^{\star}) - r_{\boldsymbol{q}}(S^t)) & \text{  if $S^t$ satisfies QC} \\
      L & \text{ otherwise}.
    \end{cases}
\end{align*}
where $S^{\star} = argmax_{S\in S_{f}} r_{\boldsymbol{q}}(S)$ and $L = \max_{S\in S_{f}} (r_{\boldsymbol{q}}(S^{\star}) - r_{\boldsymbol{q}}(S))$ is some constant. Here, $S_f$ are the feasible subsets which satisfies QC;  $S_{f} = \{S | S \subseteq N and \frac{\sum_{i \in s} x_i q_i}{\sum_{i \in s} x_i} \geq q_{av} \}$. \\
Hence, the cumulative regret in $T$ rounds incurred by the algorithm is:
\begin{equation}
    Reg(A) = \sum_{t=1}^T Reg^t(A).
\end{equation}

We now analyse the regret when the algorithm, $A$, is DPSS-UCB. 
\begin{align*}
    Reg(A) &= \sum_{t=1}^\tau Reg^t(A) + \sum_{t=\tau+1}^T Reg^t(A) \\
    &\leq  L\cdot \tau + \sum_{t=\tau+1}^T Reg^t(A) \\
    &\leq \frac{L \cdot 3 \ln T}{2\epsilon_2^2} + \sum_{t=\tau+1}^T Reg^t(A).
\end{align*}

Since our algorithm ensures that $S^t$ satisfies the approximate QC for $t > \tau$ with a probability greater than $1-\sigma$, where $\sigma = exp(-\epsilon_1^2t)$, we have, 
\begin{equation}
\label{eqn:regret}
    \mathbbm{E}[Reg(A)] \leq \frac{L \cdot 3 \log T}{2\epsilon_2^2} +  \left( \underbrace{ \sum_{t\geq \tau} \left[(1 - \sigma) (r_{\boldsymbol{q}}(S^{\star}) - r_{\boldsymbol{q}}(S^t))\right]}_{Reg_{u}(T)} +  \sigma L\right).
\end{equation}
where $S^t \in S_f$.

Now,
\begin{align*}
    \sum_{t\geq \tau} \sigma L &= \sum_{t\geq \tau} L e^{(-\epsilon_1^2t)} \leq \frac{L e^{(-\epsilon_1^2\tau)}}{1-e^{(-\epsilon_1^2)}}  \\
    & \sim O\left(\frac{1}{T^a}\right), \text{where } a = \frac{3\epsilon_1^2}{2\epsilon_2^2}.
\end{align*}

Now we bound the cumulative regret incurred after $t>\tau$ rounds when QC is satisfied, i.e., $Reg_{u}(T)$. Here we adapt the regret proof given by \cite{Chen13}. We highlight the similarities and differences of our setting with theirs and use it to bound $Reg_{u}(T)$.

\textbf{Bounding $Reg_{u}(T)$}:\\
\cite{Chen13} have proposed CUCB algorithm to tackle CMAB problem which they prove to have an upper bound regret of $O(\ln T)$. 
%To establish upper bound regret on $Reg_3(T)$, first we draw parallel between \newalgo\ and CUCB by highlighting the similarities and differences between the two settings.
Following is the CMAB problem setting considered in \cite{Chen13}:
\begin{itemize}
    \item There exists a constrained set of super-arms $\chi \subseteq 2^{N}$ available for selection.
    \item There exists an offline ($\eta, \nu$)-approximation oracle, ($\eta,\nu \leq 1$) s.t. for a given quality vector $\boldsymbol{q}'$ as input, it outputs a super-arm, S, such $\mathcal{P}(r_{\boldsymbol{q}'}(S) \geq \eta \cdot opt_{\boldsymbol{q}'}) \geq \nu$, where $opt_{\boldsymbol{q}'}$ is the optimal reward for quality vector $\boldsymbol{q}'$ as input.
    \item Their regret bounds hold for any reward function that follows the properties of monotonicity and bounded smoothness (defined below).
    \item Similar to our setting, they assume a semi-bandit feedback mechanism. 
%For a general reward function which satisfies two mild properties of monotonicity\footnote{This is different from monotonicity in QC defined in Jain \emph{et al.} } and bounded smoothness.

\end{itemize}{}
 %Then, we adapt their regret analysis to establish upper bound on $Reg_3(T)$ in \newalgo.
 
 Now, we state the reasons to adopt the regret analysis provided by \cite{Chen13} to bound $Reg_{u}(T)$
\begin{enumerate}
    \item We have shown that after $\tau$ rounds, we get the constrained set of super-arms, $\chi$, i.e., the set of super-arms that satisfies  QC), which forms a well defined constrained set, to select from in future rounds ($t > \tau$). 
    \item{We remark here that the utility function considered in our problem setting follows both the required properties, namely,}
  %  \begin{itemize}
  \\
     (i) \emph{Monotonicity:} The expected reward of playing any super-arm $S \in \chi$ is monotonically non-decreasing with respect to the quality vector, i.e., let $\boldsymbol{q}$ and $\overline{\boldsymbol{q}}$ be two quality vectors such that $\forall i \in N$, $q_i\leq \overline{q}_i$, we have $r_{\boldsymbol{q}}(S) \leq r_{\overline{\boldsymbol{q}}}(S)$ for all $S \in \chi$.
        Since our reward function is linear, it is trivial to note that it is monotone on qualities.\\
        (ii) \emph{Bounded Smoothness:} There exists a strictly increasing (and thus invertible) function $f(.)$, called bounded smoothness function, such that for any two quality vectors $\boldsymbol{q}$ and $\overline{\boldsymbol{q}}$, we have $r_{\boldsymbol{q}}(S) - r_{\overline{\boldsymbol{q}}}(S)$ $\leq$ f($\Lambda$) if $\max_{i \in S}$ $q_i$ - $\overline{q}_i$ $ \leq \Lambda$. As our reward function is linear in qualities, $f(\Lambda)= nR\times \Lambda$ is the bounded smoothness function for our setting, where $n$ is the number of agents.
         
 %   \end{itemize}{}
    \item \emph{Oracle}: Analogous to the oracle assumption in \cite{Chen13}, we have assumed the existence of an algorithm SSA (Section \ref{ssec:ssucb}). For DPSS-UCB, we use DPSS (Algorithm \ref{algo:dpss}) as our SSA . As DPSS provides exact solution, it acts as an $(\eta, \nu)$- approximate oracle for DPSS-UCB with $\eta = 1 = \nu$.
    %\citet{Chen13} assumes that for solving underlying combinatorial problem, there is a ($\eta, \nu$)-approximation oracle  ($\eta,\nu \leq 1$) s.t. for a given quality vector $\boldsymbol{q}$ as input, it outputs a super-arm such $\mathcal{P}(r_{\mu}(S) \geq \eta \cdot opt_{\boldsymbol{q}}) \geq \nu$, where $opt_{\boldsymbol{q}}$ is the optimal reward for quality vector $\boldsymbol{q}$ as input.\\
    %In our setting, we propose the algorithms namely \greedy\ and DPSS which acts as \oracle\ for our problem setting. From Theorem \ref{thm:greedy_opt}, we have, $\eta = \nu = 1$.
\end{enumerate}{}
 However, to ensure $\chi$ consists of all the correct super-arms, we need one additional property that should be satisfied, namely $\epsilon$-seperatedness property.
\begin{definition}
We say $\boldsymbol{q}=(q_1,q_2,\ldots,q_n)$ satisfies $\epsilon$-seperatedness if $\forall S \subseteq N$, $U(S) = \frac{1}{s}\sum_{i \in S}q_i$ s.t. $U(S) \not\in (\alpha-\epsilon, \alpha)$
\end{definition}
This suggests that there is no super-arm $S \in \chi$, such that $\alpha - \epsilon \leq \frac{1}{|S|}\sum_{i \in S}q_i^t \leq \alpha$. It is important for DPSS-UCB to satisfy $\epsilon_1$-seperatedness because if there exists such a super-arm, for which the average quality is between ($\alpha-\epsilon_1$, $\alpha$), \newalgo\ will include it in $\chi$ due to tolerance parameter $\epsilon_1$ while it would violate the QC.
 
\begin{theorem}
If qualities of the agents satisfy $\epsilon_1$-seperatedness, then $Reg_u(T)$ is bounded by $O(\ln T)$.
\label{thm:reg}
\end{theorem}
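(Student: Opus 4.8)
The plan is to reduce $Reg_u(T)$ to the combinatorial MAB regret analyzed by \cite{Chen13} and then adapt their CUCB bound, since the four ingredients their analysis requires have already been assembled in the preceding discussion. Concretely, $\epsilon_1$-separatedness guarantees that the constrained family recovered after $\tau$ rounds coincides with the true feasible set $S_f$ with high probability (no super-arm has true average quality in the ambiguous band $(\alpha-\epsilon_1,\alpha)$, so no correct super-arm is dropped and no incorrect one is admitted); the reward $r_{\boldsymbol{q}}$ is monotone and $f$-smooth with $f(\Lambda)=nR\Lambda$; and DPSS is an exact $(\eta,\nu)$-oracle with $\eta=\nu=1$. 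With these in hand, $Reg_u(T)$ becomes the classical CMAB regret measured against $S^\star=\arg\max_{S\in S_f} r_{\boldsymbol{q}}(S)$ over a now fixed constrained set.

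First I would introduce the per-super-arm suboptimality gaps $\Delta_S = r_{\boldsymbol{q}}(S^\star)-r_{\boldsymbol{q}}(S)$ for feasible $S$, and set $\Delta_{\min}=\min_{S:\Delta_S>0}\Delta_S$ and $\Delta_{\max}=\max_{S}\Delta_S$, noting $\Delta_{\max}\le L$. Next I would translate the UCB confidence radius $\sqrt{3\ln t/(2w_i^t)}$ through the bounded-smoothness function to obtain a sampling threshold: if every arm $i\in S^t$ satisfies $w_i^t\ge \ell_T := \frac{6(nR)^2\ln T}{\Delta_{\min}^2}$ (up to constants), then $f\!\left(\sqrt{3\ln t/(2w_i^t)}\right)\le \Delta_{\min}/2$, so the smoothness property forces the empirical optimizer returned by DPSS to have true reward within $\Delta_{\min}/2$ of optimal; hence a suboptimal feasible super-arm cannot be selected once all its arms are sufficiently sampled and the UCB estimates are accurate.

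I would then run the counter argument of \cite{Chen13}: maintain for each arm a counter incremented (for the least-sampled arm of the chosen super-arm) whenever a suboptimal feasible super-arm is played, so the number of such \emph{under-sampled} bad plays is at most $n\,\ell_T = O(\ln T)$. The remaining bad plays are those in which every arm is already sampled at least $\ell_T$ times yet a suboptimal super-arm is still chosen; by a Chernoff--Hoeffding bound on the event that some UCB estimate deviates from its mean by more than its radius, each such round occurs with probability $O(t^{-2})$, whose sum over $t$ contributes only a constant ($\pi^2/6$ type term). Multiplying both contributions by $\Delta_{\max}\le L$ and adding yields $Reg_u(T)\le L\,n\,\ell_T + O(1) = O(\ln T)$.

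The main obstacle I anticipate is not the concentration or counting machinery, which is standard once the reduction is made, but justifying the reduction itself: I must argue rigorously that optimizing reward with the inflated threshold $\alpha+\epsilon_2$ and the UCB vector $\{(\hat q_i^t)^+\}$ is, on the high-probability event, equivalent to optimizing $r_{\boldsymbol{q}}$ over the fixed true-feasible set $S_f$. This is exactly where $\epsilon_1$-separatedness is essential, as it rules out borderline super-arms that DPSS could misclassify due to estimation error, which would otherwise break the fixed-constrained-set assumption underlying \cite{Chen13} and invalidate $\Delta_{\min}$. A secondary subtlety is that inflation by $\epsilon_2$ biases the feasibility test; I would absorb this by appealing to Theorem \ref{thm:tau} and Lemma \ref{lemma:ucbmean}, which ensure that after $\tau$ rounds the inflation is calibrated so the recovered feasible set equals $S_f$ w.h.p., leaving only the reward-optimality gap to be controlled by the adapted Chen et al.\ argument.
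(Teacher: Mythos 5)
Your proposal follows essentially the same route as the paper: both reduce $Reg_u(T)$ to the CUCB regret analysis of Chen et al.\ by invoking monotonicity, bounded smoothness with $f(\Lambda)=nR\Lambda$, DPSS as an exact $(1,1)$-oracle, and $\epsilon_1$-separatedness to fix the feasible set, then bound the number of suboptimal plays by $O(n\ln T/ (f^{-1}(\Delta_{\min}))^2)$ plus a constant and multiply by $\Delta_{\max}$. You merely spell out the counter/concentration machinery that the paper defers to the citation (and use per-super-arm rather than per-arm gaps), so the argument is the same in substance.
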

\begin{proof}
Following from the proof in \cite{Chen13}, we define some parameters.
A super-arm, $S$ is bad if $r_{\boldsymbol{q}}(S) < opt_{\boldsymbol{q}}$. Define $S_B$ as the set of bad super-arms. For a given underlying agent $i \in [n]$, define:
\begin{align*}
    \Delta_{\text{min}}^i = \text{opt}_{\boldsymbol{q}} - \text{max}\{r_{\boldsymbol{q}}(S) | S \in S_B, i \in S\}\\
    \Delta_{\text{max}}^i = \text{opt}_{\boldsymbol{q}} - \text{min}\{r_{\boldsymbol{q}}(S) | S \in S_B, i \in S\}.
\end{align*}{}
%Now, in the CUCB algorithm, they explore all arms for the first $n$ rounds, where $m$ is the number of agents. \ka{I don't think this statement is correct.} However, since we have already have explored this in the first $\tau$ rounds, we skip this exploration. Hence, 
Using the same proof as in \cite{Chen13}, we can show that, $V_T$,  the expected number of times we play a sub-optimal agent till round $T$, is upper bounded as:
\begin{align*}
    V_T &\leq n(l_T) + \sum_{t=\tau}^T\frac{2n}{t^2} \leq n(l_T) + \sum_{t=1}^T\frac{2n}{t^2}\\
    &\leq \frac{6n \cdot \ln T}{(f^{-1}(\Delta_{\text{min}}))^2} + \left(\frac{\pi^2}{3}\right) \cdot n.
\end{align*}{}
where $l_T = \frac{6 \ln T}{(f^{-1}(\Delta_{\text{min}}))^2}$. 
Hence, we can bound the regret as:-
\begin{align*}
    Reg_{u}(T) &\leq V_T\cdot \Delta_{\text{max}}\leq \left(\frac{6 \cdot \ln T}{(f^{-1}(\Delta_{\text{min}}))^2} + \frac{\pi^2}{3}\right)n \cdot \Delta_{\text{max}}\\
    &= \left(\frac{6 \cdot \ln T}{(\frac{\Delta_{\text{min}}}{R})^2} + \frac{\pi^2}{3}\right)n \cdot \Delta_{\text{max}}.
\end{align*}\end{proof}

Substituting the results of Theorem \ref{thm:reg} in Equation \ref{eqn:regret}, we prove that \newalgo\ incurs a regret of $O(\ln T)$.
\section{Greedy Approach}
\label{sec:greedyApp}
In the previous sections, we propose a framework and dynamic programming based algorithm to solve our subset selection problem for both when the agents' quality is known and not. Since \dpss\ explores all the possible combinations of the selection vector and the utility associated with it, the complexity of \dpss\ is of $O(2^n)$, which makes it difficult to scale when $n$ is large.

To overcome this limitation, we propose a greedy based approach to our problem. When the quality of agents are known, we propose \greedy\ that runs in polynomial time, $O(n\log n)$, and provides an approximate solution to our ILP. Then, we use \greedy\ as our SSA in the SS-UCB framework and propose GSS-UCB as an alternate algorithm to \newalgo\ in the setting where the qualities of the agents are unknown.

\subsection{Greedy Subset Selection (GSS)}
\label{sec:known}
Greedy algorithms have been proven effective to provide approximate solutions to ILP problems such as 0-1 knapsack. They do so by solving linearly relaxed variants of an ILP, such as fractional knapsack, and removing any fractional unit from its solution. We propose a similar algorithm for our subset selection problem by allowing $x_i \in [0,1]$. However, we cannot simply remove fractional units from our solution, as it may lead to QC violation. Consider the following example:

Given $n=2$ agents with qualities, $\boldsymbol{q} = [0.6,0.9]$, $\boldsymbol{c} = [10,100]$ and $\alpha = 0.7$. Allowing fractional units to be taken, the optimal solution would be to take $x_1 = 1, x_2 = 0.5$ units of the two agents. Removing fractional units would lead to selecting only the first agent, which violates the QC. Towards this, we include an additional step (Line \ref{greedy:s3}, Algorithm \ref{algo:gss}) in our algorithm that ensures that QC is not violated. Formally, the algorithm proceeds as follows:

\begin{enumerate}
    \item Divide the agents into the four categories, namely, $S_1$,$S_2$,$S_3$,$S_4$, as described in Section \ref{sec:dpss}.
    \item Select all agents in $S_1$. Let $d = \sum_{i \in S_1}(q_i - \alpha$) be the excess quality accumulated and as before, drop all agents in $S_4$.
    \item For agents in $S_2$, sort them in the decreasing order of revenue gained per unit loss in quality ($\frac{r_i}{\alpha-q_i}$). Similarly, for agents in $S_3$, sort them in the increasing order of revenue lost per unit gain in quality ($\frac{r_i}{\alpha-q_i}$).
    \item Select units (could be fractional) from agents from $S_2$ until the total loss of quality is no more than $d$. Essentially, we use the agents in $S_2$ to increase revenue while ensuring average quality is above the threshold.
    \item For agents in $S_2$ with remaining fractional units,
    we pair them up with an equivalent fractional unit of an agent in $S_3$ that balances the loss in average quality.
    \item When the revenue gained per unit loss in quality from the first non-exhausted agent in $S_2$ is less than the revenue lost per unit gain of quality from the first non-exhausted agent in $S_3$, terminate the algorithm. An agent is exhausted if the unit produce is completely selected. 
    \item For any agent in $S_3$ with a fractional unit, take the complete unit instead. For all other agents, remove any fractional units selected.
\end{enumerate}{}

%%%%%%%%%%%%%%%%%%%%%%%%%%%%%%%%%%%%%%
\begin{algorithm}[!ht]
\caption{\greedy}
\label{algo:gss}
%\algsetup{indent=0.75em}
\begin{small}
%\SetAlgoLined
% \KwResult{Write here the result }
\begin{algorithmic}[1]
\State \textbf{Inputs:} $N$, $\alpha$, $R$, costs  $\boldsymbol{c} = [c_i]$, qualities $\boldsymbol{q} = [q_i]$
\State \textbf{Output:} Quantities procured $\boldsymbol{x} = (x_1,\ldots,x_n)$
\State \textbf{Initialization:} $\forall i \in N$, $r_i = Rq_i - c_i$
\State Segregate $S_1$,$S_2$,$S_3$,$S_4$ as described in Section \ref{sec:dpss}
\State $\forall i \in S_1$, $x_i = 1$; $d = \sum_{i \in S_1}(q_i - \alpha)$
\State $\forall i \in S_4$, $x_i = 0$
\State $L_2 = sort(S_2)$ \text{on decreasing order of } $\frac{r_i}{\alpha-q_i}$
\State $L_3 = sort(S_3)$ \text{on increasing order of } $\frac{r_i}{\alpha-q_i}$
\State $p=0, q=0$
 \While{$d > 0$ and $p < |S_2|$} 
 \State $i = L_2[p]$; 
 \If{$\alpha - q_i \leq d$} $x_i = 1$, $d = d -\alpha - q_i$, $p += 1$
 \Else \ $x_i = \frac{d}{\alpha - q_i}$, $d=0$
 \EndIf  
 \EndWhile
 \While{$p < |S_2|$ and $q < |S_3|$}
 \State $i = L_2[p]$, $j = L_3[q]$
 \State $a = \frac{r_i}{\alpha-q_i}$, $b = \frac{r_j}{\alpha-q_j}$
 \If{$a \leq b$} \textbf{break};
 \EndIf  
 \State $w_1 = \min((1-x_i)(\alpha-q_i) , (1-x_j)(q_j-\alpha))$
 \State $x_i += \frac{w_1}{\alpha-q_i}$, $x_j += \frac{w_1}{q_j - \alpha}$
 \If{$x_i == 0$} $p++$;
 \EndIf  
 \If{$x_j == 0$} $q++$;
 \EndIf  
 \EndWhile
 \If{$0 < x_j < 1$} $x_j = 1$
 \EndIf  
 \label{greedy:s3}
 \State \textbf{return} $\lfloor\boldsymbol{x}\rfloor$
\end{algorithmic}
 \end{small}
\end{algorithm}
%%%%%%%%%%%%%%%%%%%%%%%%%%%%%%%%%%%%%%
%Format it properly

% Though the problem looks similar to the knapsack problem, it is not straightforward as the constraint is on the average quality of the agents and not on how many agents we can select. We design a greedy, deterministic, polynomial time (in the number of agents) algorithm, \emph{Greedy Subset Selection} (\greedy), to solve for the optimal subset selection when the qualities and costs %costs not discussed before
% are known to the center. 

% \begin{lemma}
% \greedy\ runs in $O(n \log n)$.
% \end{lemma}
% \begin{proof}
% It can be seen that steps 7 and 8 of the algorithm, run in $O(n \log n)$, while the two \emph{while} loops run in $O(n)$. Thus, \greedy\ runs in $O(n \log n)$. 
% \end{proof}

% \begin{theorem}
% \label{thm:greedy_opt}
% The output returned by GSS solves the centers subset selection problem formulated in ILP \ref{eq:opt_prb}.
% \end{theorem}{}
% \begin{proof}
% By design of the algorithm, we ensure that the quality is maintained above $\alpha$ at each step of the algorithm while trying to increase the utility. Due to space constraints, we skip a formal proof and can be provided if needed. 
% \end{proof}

\subsection{Approximation Ratio}
\label{ssec:approxRatio}
While \greedy\ is computationally more efficient than \dpss, it is important to note that it may not always return the optimal subset of agents. We show for the following example, that \greedy\ doesn't have a constant approximation w.r.t. the optimal solution:

Consider $n=3$ agents with qualities, $\boldsymbol{q} = [1.00,0.98,0.97]$ and $\boldsymbol{c} = [R-\epsilon,\frac{78R}{100}, \frac{47R}{100}]$. Hence, $\boldsymbol{r} = [\epsilon,\frac{R}{5}, \frac{R}{2}]$, where $R$ is some constant as discussed before such that $r_i = Rq_i - c_i$. If $\alpha = 0.99$, the value of $\frac{r_i}{\alpha - q_i}$ for the third agent is higher than that of the second, but only a fractional unit can pair with the first agent. Hence, according to \greedy, we only select the first agent giving us a utility of $\epsilon$, whereas the optimal utility is equal to $\epsilon+\frac{R}{5}$ corresponding to choosing the first and the third agent. Thus, the approximation ratio is $\frac{\epsilon}{\epsilon+\frac{R}{5}}$. Since $\epsilon$ can take an arbitrary small value, the approximation ratio between the utility achieved by \greedy\ and \dpss\ can be arbitrary small. 

However, through experiments, we show that in practice, \greedy\ gives close to optimal solutions at a huge computational benefit that allows us to scale our framework for a large number of agents, such as in an E-commerce setting.

\subsection{GSS-UCB}
\label{ssec:gssUCB}
When we use \greedy\ as the SSA in our SS-UCB framework, we refer to the algorithm as GSS-UCB. While the regret analysis may not necessarily hold, as \greedy\ does not have a constant approximation, we still show that in practice, it works as good as \newalgo\ in both (i) achieving constraint satisfaction after $\tau$ rounds and (ii) the regret incurred thereafter. We show this via experiments, as discussed in Section \ref{sec:exp}.

\section{Experimental Analysis}
\label{sec:exp}

\subsection{Subset Selection With Known Qualities}
In this section, we compare the performance of \greedy\ with \dpss\ in the setting where quality of the agents is known. In Figure \ref{fig:gssvdpss}, we compare the ratio of the utility achieved by \greedy\ ($z_{gss}$) to the utility achieved by \dpss\ ($z_{dpss}$) while ensuring the QC is met. In Figure \ref{fig:gssvdpss_all}, we present a box plot of the distribution of the ratios of these utilities over $1000$ iterations for $\alpha = 0.7$. To compare the performance of \greedy\ for much larger values of $n$, we compare it against the utility achieved by an ILP solver ($z_{ilp}$), namely, the COIN-OR Branch and Cut Solver (CBC) \cite{coin_ORcbc} since the computational limitations of \dpss\ made it infeasible to run experiments for large values of $n$. The results for the same are presented in Figure \ref{fig:gssvilp}. Lastly, in Table \ref{table:time}, we compare the ratio of the time taken by \greedy\ ($t_{gss}$) with respect to \dpss\ ($t_{dpss}$) and the ILP solver ($t_{ilp}$) for different values of $n$ with $\alpha$ being set to $0.7$. 

\subsubsection{Setup} For different values of $n$, the number of agents and $\alpha$, the quality threshold, we generate agents with $q_i$ and $c_i$ both $ \sim U[0,1]$. For Figure \ref{fig:gssvdpss} and \ref{fig:gssvilp}, we average our results over $1000$ iterations for each ($n$, $\alpha$) pair, while in Figure \ref{fig:gssvdpss_all}, we plot the distribution of the ratios obtained in each of the $1000$ iterations for different values of $n$ with $\alpha$ set to $0.7$. We use $R=1$ for all our experiments.

\begin{figure*}[ht]
\centering
\minipage[t]{0.64\linewidth}
\begin{subfigure}[t]{0.49\linewidth}
    % \label{fig:gssvdpss}
    \includegraphics[width=\linewidth]{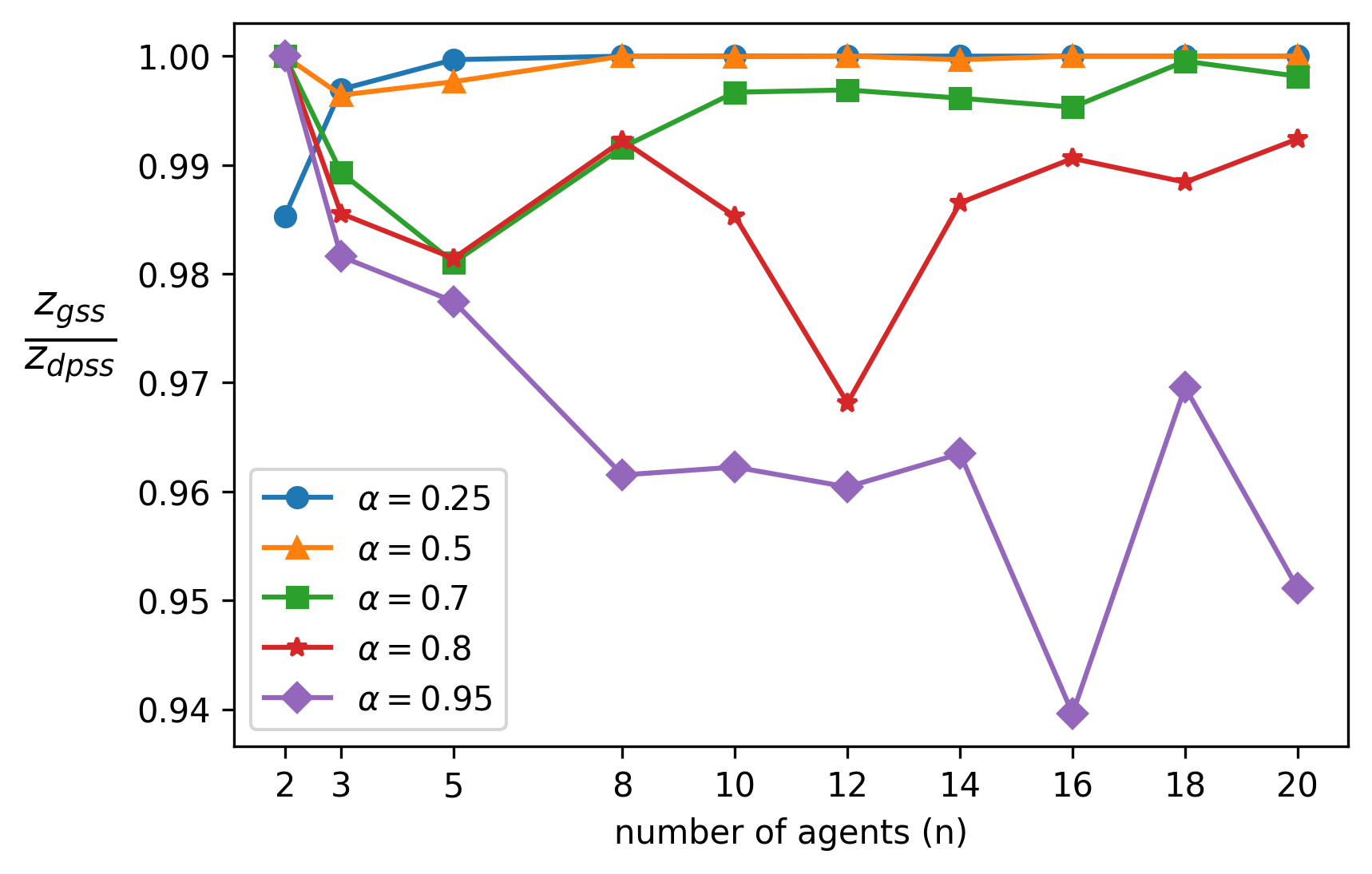}
    \caption{w.r.t DPSS }
    \label{fig:gssvdpss}
\end{subfigure}
% \label{fig:gssvdpss}
\begin{subfigure}[t]{0.49\linewidth}
    % \label{fig:gssvilp}
    \includegraphics[width=\linewidth]{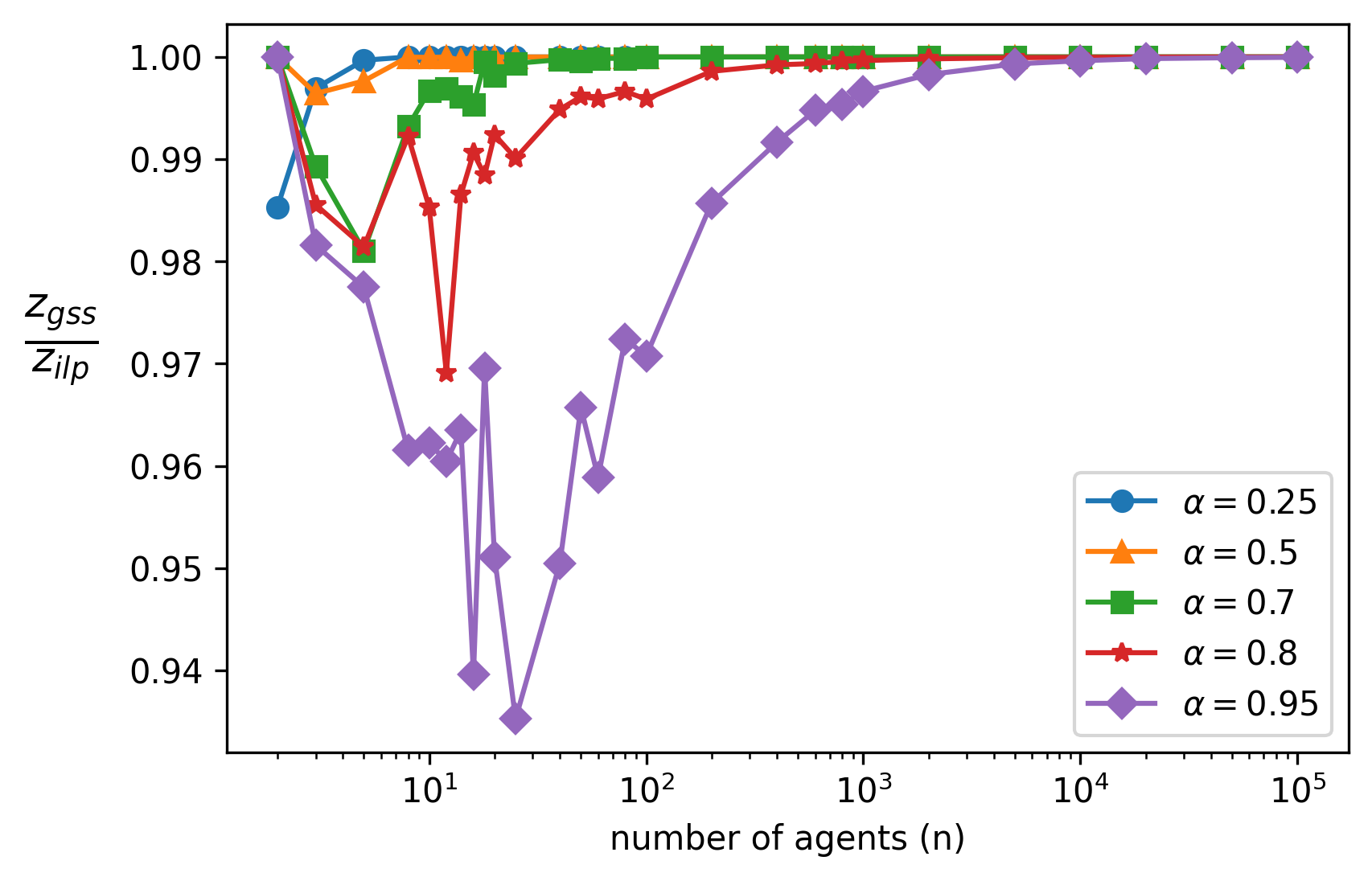}
    \caption{w.r.t. ILP }
    \label{fig:gssvilp}
\end{subfigure}
% \label{fig:gssvilp}
%Relative Performance of GSS to Optimal Solution
\caption{Performance of GSS on different values of $\alpha$}
\label{fig:gssperf}
\endminipage
~ % some horizontal spacing, can be done in different ways
\minipage[t]{0.32\linewidth}%
    \includegraphics[width=0.99\linewidth]{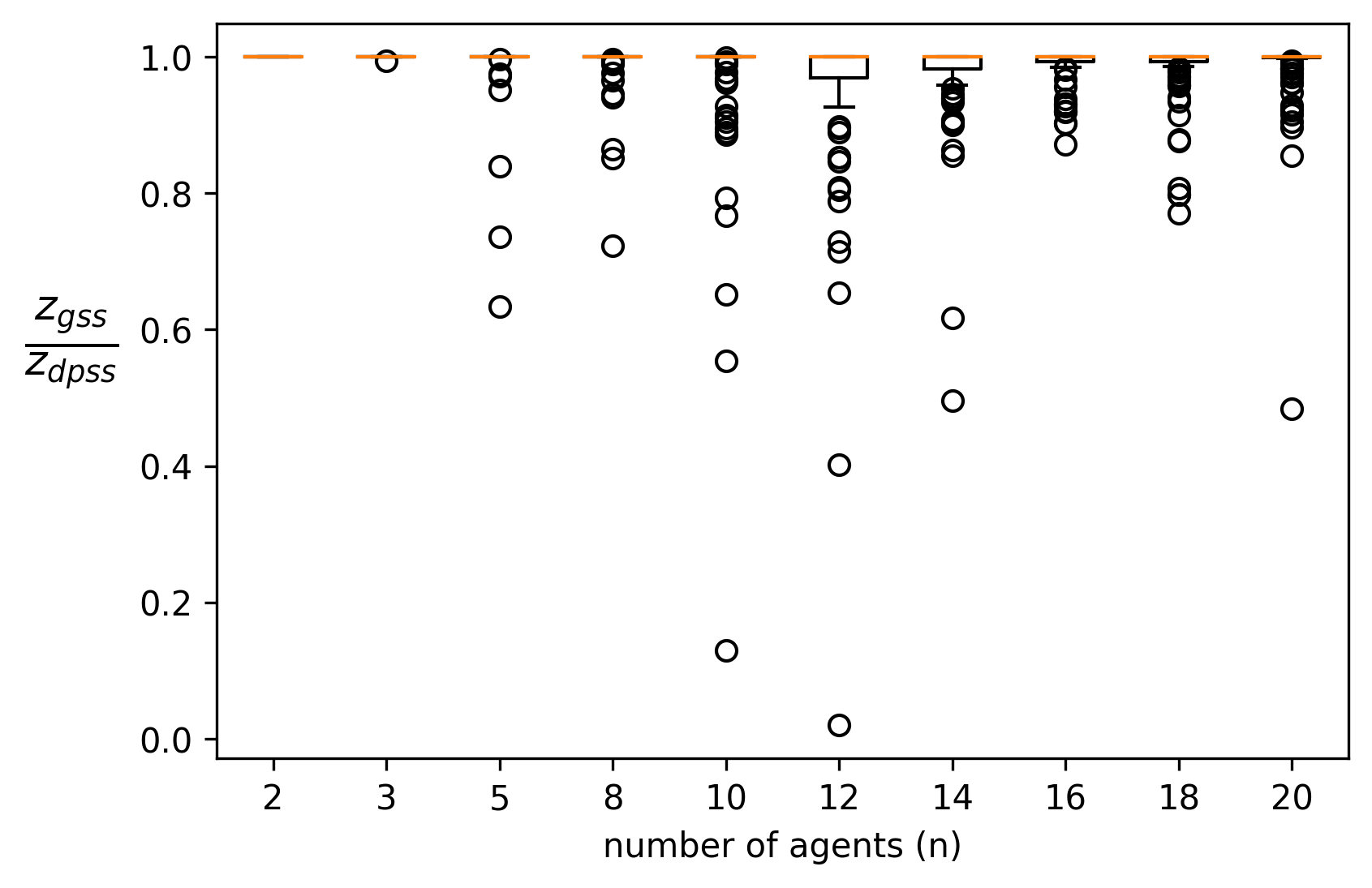}
    \caption{GSS vs DPSS ratio distribution}
    \label{fig:gssvdpss_all}
\endminipage
\end{figure*}

\subsubsection{Results and Discussion} 
As can be seen from Figures \ref{fig:gssvdpss} and \ref{fig:gssvilp}, the average ratio of both ($\frac{z_{gss}}{z_{dpss}}$) and ($\frac{z_{gss}}{z_{ilp}}$) lies approximately between [0.94,1.0], with a median of $1.0$ for almost all values of $n$ and only a few outliers and a few rare instances when the ratio drops below 0.2 as evident from Figure \ref{fig:gssvdpss_all}. This indicates that \greedy\ performs almost as good as \dpss\ in practice with an exponentially improving computational performance in terms of time complexity with respect to \dpss\ and an almost 50x improvement over the ILP solver as well. This establishes the efficacy of \greedy\ for practical use at scale.

\begin{table}[h]
\centering
\captionsetup{justification=centering}
 \begin{tabular}{||c | c | c||} 
 \hline
  n & $t_{dpss} : t_{gss}$ & $t_{ilp} : t_{gss}$ \\ [0.5ex] 
 \hline\hline
\textbf{2} &5.5 &70 \\ \hline
\textbf{5} &15.7 &64 \\ \hline
\textbf{8} &32.6 &63.7 \\ \hline
\textbf{10} &54.3 &58.6 \\ \hline
\textbf{12} &106.3 &67.6 \\ \hline
\textbf{14} &284.4 &65.3 \\ \hline
\textbf{16} &897.1 &60.2 \\ \hline
\textbf{18} &3109.7 &63.1 \\ \hline
\textbf{20} &11360.6 &68.1 \\ \hline
 \hline
\end{tabular}
% \caption{Describe table}
% \label{table1}
% \end{table}
\qquad
% \begin{table}[h]
% \centering
 \begin{tabular}{||c | c | c||} 
 \hline
 n & $t_{ilp} : t_{gss}$ \\ [0.5ex] 
 \hline\hline
 \textbf{25} &66.7 \\ \hline
\textbf{50} &58.3 \\ \hline
\textbf{100} &52.7 \\ \hline
\textbf{400} &43.1 \\ \hline
\textbf{1000} &31.8 \\ \hline
\textbf{5000} &31.6 \\ \hline
\textbf{10000} &34.5 \\ \hline
\textbf{50000} &45 \\ \hline
\textbf{100000} &56.8 \\ \hline
 \hline
\end{tabular}
\caption{Computational performance of GSS w.r.t. to DPSS and ILP}
\label{table:time}
\end{table}

\subsection{Subset Selection With Unknown Qualities}
In this section, we present experimental results of \newalgo\ and GSS-UCB towards the following:
\begin{enumerate}
    \item Constraint Satisfaction: As discussed in section \ref{sec:correct}, DPSS-UCB satisfies the QC approximately with high probability after $\tau = \frac{3\ln T}{2\epsilon_2^2}$ rounds. Here, $\alpha + \epsilon_2$ is the target constraint of the agent when $\alpha$ is the required average quality threshold. Towards this, we plot the average number of iterations where \newalgo\ and GSS-UCB returns a subset that satisfies QC at each round in our experiment for different values of $\epsilon_2$. 
    \item Regret incurred for $t > \tau$: We show that the regret incurred by our algorithm for $t>\tau$, follows a curve upper bounded by $O(\ln T)$. Towards this we plot the cumulative regret vs. round $t$, where $\tau < t \leq T$.
    %We compare the results for both \newalgo\ and GSS-UCB.
\end{enumerate}

\subsubsection{Setup} To carry out these experiments, we generated $n=10$ agents with both $q_i$, $c_i$ $ \sim U[0,1]$. We chose $\alpha = 0.7$ as for a higher value of $\alpha$ the number of super-arms satisfying QC is very low and hardly much to learn whereas for a low value, the number of super-arms that satisfy QC is very high but practically of not much interest. In Figure \ref{fig:correct}, we perform the experiment over a varied range of values of $\epsilon_2$, whereas in Figure \ref{fig:regret}, we set $\epsilon_2 = 0.01$. We average our results for 1000 iterations of each experiment. For example, in Figure \ref{fig:correct}, a value of $0.4$ at some round $t$, would denote that in 40\% of the iterations, the QC was satisfied at round $t$. For both the experiments, $R=1$ and $T=100000$.

\subsubsection{Discussion} Higher the value of $\epsilon_2$, higher is the target constraint and thus more conservative is our algorithm in selecting the subset of agents. Therefore, we achieve correctness quickly, which is evident from Figure \ref{fig:correct}. In all three cases, the algorithm achieves correctness in close to 100\% of the iterations, after $\frac{3\ln T}{2\epsilon_2^2}$ rounds (indicated by the vertical dotted line), which justifies our value of $\tau$. Similarly, the regret incurred by \newalgo\ for $t>\tau$ follows a curve upper bounded by $O(\ln T)$. The regret incurred by GPSS-UCB is slightly lower than \newalgo\ which further establishes the efficacy of our greedy approach.

\begin{figure}
\centering
\includegraphics[width=0.8\columnwidth, height=3.4cm]{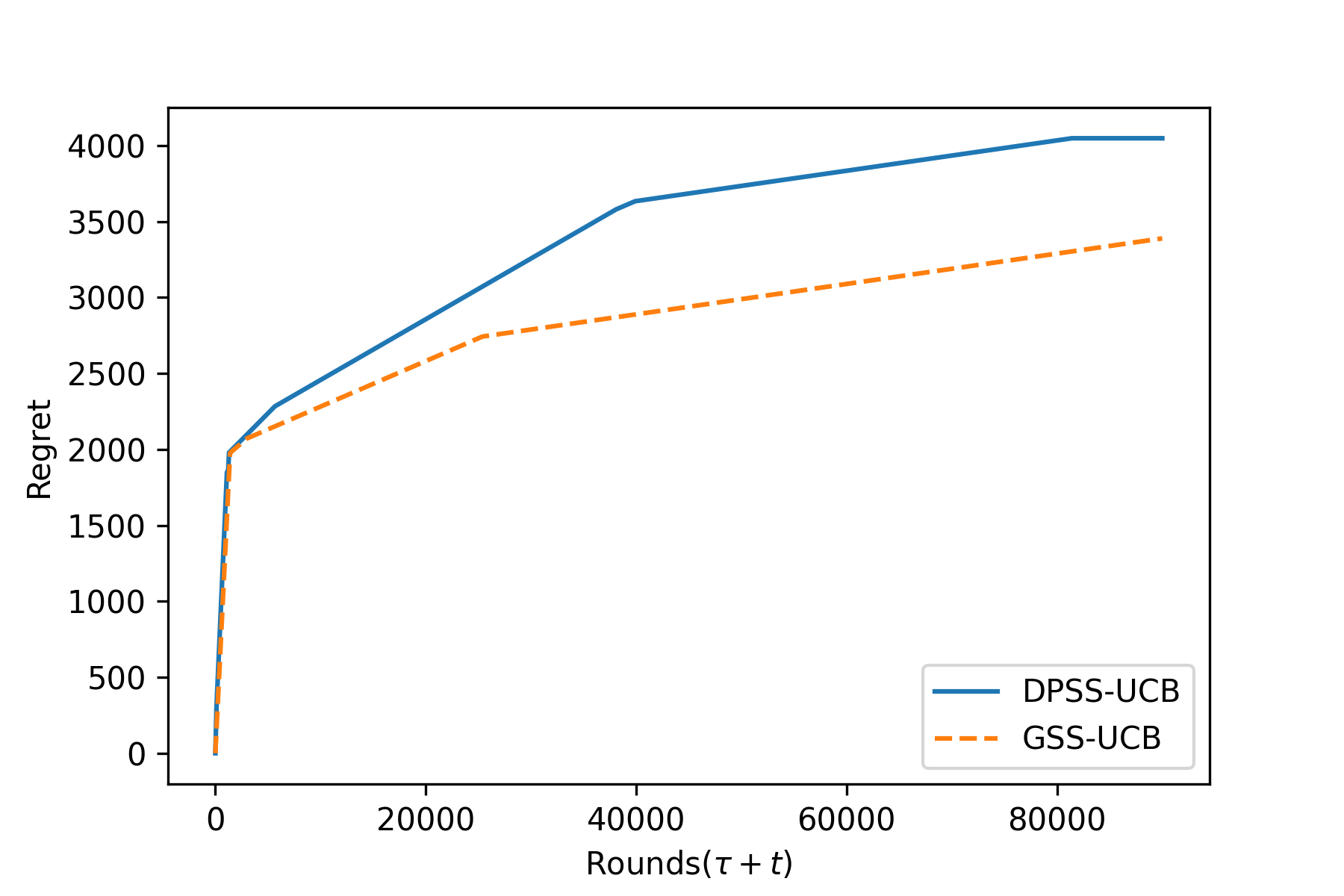}
\caption{Regret incurred for $t > \tau$}
\label{fig:regret}
\end{figure}

\begin{figure}
\centering
\includegraphics[width=0.8\columnwidth, height=3.4cm]{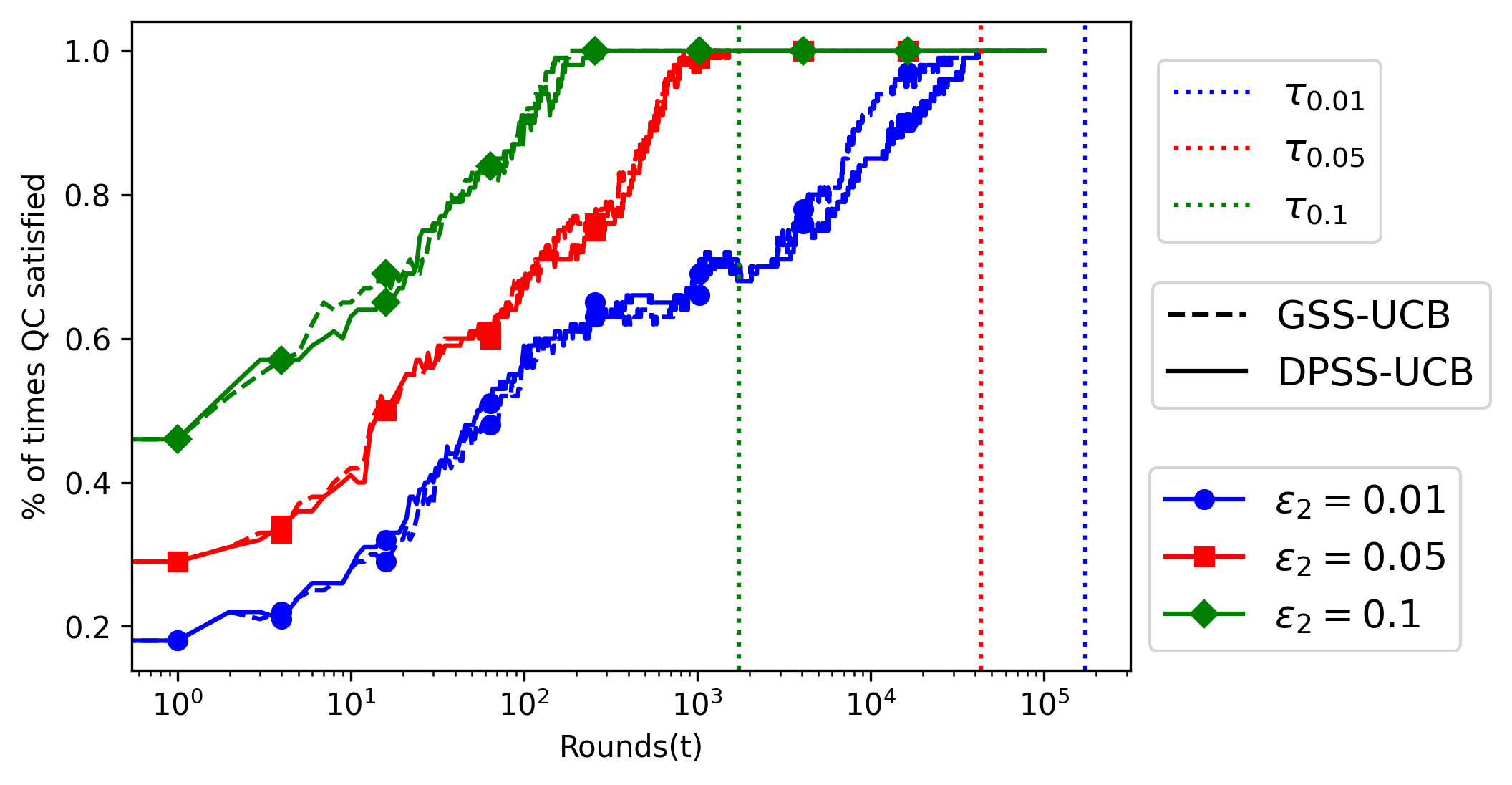}
\caption{Constraint satisfaction at each round}
\label{fig:correct}
\end{figure}

\section{Conclusion and Future Work}
\label{sec:concl}
In this paper, we addressed the class of problems where a central planner had to select a subset of agents that maximized its utility while ensuring a quality constraint. We first considered the setting where the agents' quality is known and proposed \dpss\ that provided an exact solution to our problem. When the qualities were unknown, we modeled our problem as a CMAB problem with semi-bandit feedback. We proposed SS-UCB as a framework to address this problem where both the constraint and the objective function depend on the unknown parameter, a setting not considered previously in the literature. Using \dpss\ as our SSA in SS-UCB, we proposed \newalgo\ that incurred a $O(\ln T)$ regret and achieved constraint satisfaction with high probability after $\tau = O(\ln T)$ rounds. To address the computational limitations of \dpss, we proposed  \greedy\ for our problem that allowed us to scale our framework to a large number of agents. Via simulations, we showed the efficacy of \greedy.

The SS-UCB framework proposed in this paper can be used to design and compare other approaches to this class of problems that find its applications in many fields. It can also easily be extended to solve for other interesting variants of the problem
such as (i) where the pool of agents to choose from is dynamic with new agents entering the setting, (ii) where an agent selected in a particular round is not available for the next few rounds (sleeping bandits) possibly due to lead time in procuring the units, a setting which is very common in operations research literature. Our work can also be extended to include strategic agents where the planner needs to design a mechanism to elicit the agents' cost of production truthfully.

\bibliographystyle{plain}
\bibliography{main}

\end{document}